\newtheorem{theorem}{Theorem}
\newtheorem{lemma}{Lemma}
\begin{document}
% The file aaai.sty is the style file for AAAI Press 
% proceedings, working notes, and technical reports.
%
\title{Wasserstein Distance Guided Representation Learning\\for Domain Adaptation}
\author{Jian Shen, Yanru Qu, Weinan Zhang\thanks{Weinan Zhang is the corresponding author.}, Yong Yu\\
Shanghai Jiao Tong University\\
\{rockyshen, kevinqu, wnzhang, yyu\}@apex.sjtu.edu.cn
}
\maketitle
\begin{abstract}
Domain adaptation aims at generalizing a high-performance learner on a target domain via utilizing the knowledge distilled from a source domain which has a different but related data distribution. One solution to domain adaptation is to learn domain invariant feature representations while the learned representations should also be discriminative in prediction. To learn such representations, domain adaptation frameworks usually include a domain invariant representation learning approach to measure and reduce the domain discrepancy, as well as a discriminator for classification. Inspired by Wasserstein GAN, in this paper we propose a novel approach to learn domain invariant feature representations, namely Wasserstein Distance Guided Representation Learning (WDGRL). WDGRL utilizes a neural network, denoted by the domain critic, to estimate empirical Wasserstein distance between the source and target samples and optimizes the feature extractor network to minimize the estimated Wasserstein distance in an adversarial manner. The theoretical advantages of Wasserstein distance for domain adaptation lie in its gradient property and promising generalization bound. Empirical studies on common sentiment and image classification adaptation datasets demonstrate that our proposed WDGRL outperforms the state-of-the-art domain invariant representation learning approaches.
\end{abstract}

\section{Introduction}

Domain adaptation defines the problem when the target domain labeled data is insufficient, while the source domain has much more labeled data. Even though the source and target domains have different marginal distributions \cite{ben2007analysis,pan2010survey}, domain adaptation aims at utilizing the knowledge distilled from the source domain to help target domain learning.
In practice, unsupervised domain adaptation is concerned and studied more commonly since manual annotation is often expensive or time-consuming. Faced with the covariate shift and the lack of annotations, conventional machine learning methods may fail to learn a high-performance model.

To effectively transfer a classifier across different domains, different methods have been proposed, including instance reweighting \cite{mansour2009domain}, subsampling \cite{chen2011automatic}, feature mapping \cite{tzeng2014deep} and weight regularization \cite{rozantsev2016beyond}. Among these methods feature mapping has shown great success recently, which projects the data from different domains to a common latent space where the feature representations are domain invariant. Recently, deep neural networks, as a great tool to automatically learn effective data representations, have been leveraged in learning knowledge-transferable feature representations for domain adaptation \cite{glorot2011domain,chen2012marginalized,zhuang2015supervised,long2015learning,ganin2016domain}.

On the other hand, generative adversarial nets (GANs) \cite{goodfellow2014generative} are heavily studied during recent years, which play a minimax game between two adversarial networks: the discriminator is trained to distinguish real data from the generated data, while the generator learns to generate high-quality data to fool the discriminator. It is intuitive to employ this minimax game for domain adaptation to make the source and target feature representations indistinguishable. These adversarial adaptation methods have become a popular solution to reduce domain discrepancy through an adversarial objective with respect to a domain classifier \cite{ganin2016domain,tzeng2017adversarial}. However, when the domain classifier network can perfectly distinguish target representations from source ones, there will be a gradient vanishing problem. A more reasonable solution would be to replace the domain discrepancy measure with Wasserstein distance, which provides more stable gradients even if two distributions are distant \cite{arjovsky2017wasserstein}.

In this paper, we propose a domain invariant representation learning approach to reduce domain discrepancy for domain adaptation, namely Wasserstein Distance Guided Representation Learning (WDGRL), inspired by recently proposed Wasserstein GAN \cite{arjovsky2017wasserstein}. WDGRL trains a domain critic network to estimate the empirical Wasserstein distance between the source and target feature representations. The feature extractor network will then be optimized to minimize the estimated Wasserstein distance in an adversarial manner. By iterative adversarial training, we finally learn feature representations invariant to the covariate shift between domains. Additionally, WDGRL can be easily adopted in existing domain adaptation frameworks \cite{tzeng2014deep,long2015learning,zhuang2015supervised,long2016deep,bousmalis2016domain} by replacing the representation learning approaches. Empirical studies on common domain adaptation benchmarks demonstrate that WDGRL outperforms the state-of-the-art representation learning approaches for domain adaptation. Furthermore, the visualization of learned representations clearly shows that WDGRL successfully unifies two domain distributions, as well as maintains obvious label discrimination.

\section{Related Works}

Domain adaptation is a popular subject in transfer learning \cite{pan2010survey}. It concerns covariate shift between two data distributions, usually labeled source data and unlabeled target data. Solutions to domain adaptation problems can be mainly categorized into three types: i). Instance-based methods, which reweight/subsample the source samples to match the distribution of the target domain, thus training on the reweighted source samples guarantees classifiers with transferability \cite{huang2007correcting,chen2011co,chu2013selective}. ii). Parameter-based methods, which transfer knowledge through shared or regularized parameters of source and target domain learners, or by combining multiple reweighted source learners to form an improved target learner \cite{duan2012exploiting,rozantsev2016beyond}. iii). The last but the most popular and effective methods are feature-based, which can be further categorized into two groups \cite{weiss2016survey}. Asymmetric feature-based methods transform the features of one domain to more closely match another domain \cite{hoffman2014asymmetric,kandemir2015asymmetric,courty2017optimal} while symmetric feature-based methods map different domains to a common latent space where the feature distributions are close.

Recently, deep learning has been regarded as a powerful way to learn feature representations for domain adaptation. Symmetric feature-based methods are more widely studied since it can be easily incorporated into deep neural networks \cite{chen2012marginalized,zhuang2015supervised,long2015learning,ganin2016domain,bousmalis2016domain,luo2017close}.
Among symmetric feature-based methods, minimizing the maximum mean discrepancy (MMD) \cite{gretton2012kernel} metric is effective to minimize the divergence of two distributions. MMD  is a nonparametric metric that measures the distribution divergence between the mean embeddings of two distributions in reproducing kernel Hilbert space (RKHS). The deep domain confusion (DDC) method \cite{tzeng2014deep} utilized MMD metric in the last fully connected layer in addition to the regular classification loss to learn representations that are both domain invariant and discriminative. Deep adaptation network (DAN) \cite{long2015learning} was proposed to enhance the feature transferability by minimizing multi-kernel MMD in several task-specific layers. On the other hand, correlation alignment (CORAL) method \cite{sun2016return} was proposed to align the second-order statistics of the source and target distributions with a linear transformation and \cite{sun2016deep} extended CORAL and proposed Deep CORAL to learn a nonlinear transformation that aligns correlations of layer activations in deep neural networks. 

Another class of symmetric feature-based methods uses an adversarial objective to reduce domain discrepancy. Motivated by theory in \cite{ben2007analysis,ben2010theory} suggesting that a good cross-domain representation contains no discriminative information about the origin (i.e. domain) of the input, domain adversarial neural network (DANN) \cite{ajakan2014domain,ganin2016domain} was proposed to learn domain invariant features by a minimax game between the domain classifier and the feature extractor. In order to back-propagate the gradients computed from the domain classifier, DANN employs a gradient reversal layer (GRL). On the other hand, \cite{tzeng2017adversarial} proposed a general framework for adversarial adaptation by choosing adversarial loss type with respect to the domain classifier and the weight sharing strategy. Our proposed WDGRL can also be viewed as an adversarial adaptation method since it evaluates and minimizes the empirical Wasserstein distance in an adversarial manner. Our WDGRL differs from previous adversarial methods: i). WDGRL adopts an iterative adversarial training strategy, ii). WDGRL adopts Wasserstein distance as the adversarial loss which has gradient superiority.

Another related work for domain adaptation is optimal transport \cite{courty2014domain,courty2017optimal}, which is equivalent to Wasserstein distance. And \cite{redko2016theoretical} gave a theoretical analysis that Wasserstein distance can guarantee generalization for domain adaptation. Though these works utilized Wasserstein distance in domain adaptation, there are distinct differences between WDGRL and the previous ones: these works are asymmetric feature-based methods which design a transformation from source representations to target ones based on optimal transport while WDGRL is a symmetric method that projects both domains to a common latent space to learn domain invariant features. And WDGRL can be integrated into other symmetric feature-based adaptation frameworks. %Besides, WDGRL is easier to be adopted in deep learning based domain adaptation frameworks since it uses a neural network to estimate the Wasserstein distance.

Besides learning shared representations, domain separation network (DSN)  \cite{bousmalis2016domain} was proposed to explicitly separate private representations for each domain and shared ones between the source and target domains. The private representations were learned by defining a difference loss via a soft orthogonality constraint between the shared and private representations while the shared representations were learned by DANN or MMD mentioned above. With the help of reconstruction through private and shared representations together, the classifier trained on the shared representations can better generalize across domains. Since our work focuses on learning the shared representations, it can also be integrated into DSN easily.

\section{Wasserstein Metric}

Before we introduce our domain invariant feature representation learning approach, we first give a brief introduction of the Wasserstein metric. The Wasserstein metric is a distance measure between probability distributions on a given metric space $(M, \rho)$, where $\rho(x,y)$ is a distance function for two instances $x$ and $y$ in the set $M$. The $p{\text{-th}}$ Wasserstein distance between two Borel probability measures $\mathbb{P} $ and $\mathbb{Q}$ is defined as
\begin{equation}
W_p(\mathbb{P}, \mathbb{Q}) = \Big(\inf_{\mu \in \Gamma(\mathbb{P}, \mathbb{Q}) } \int \rho(x,y)^p d\mu(x,y) \Big)^{1/p},
\end{equation}
where $\mathbb{P}, \mathbb{Q} \in \{\mathbb{P} : \int \rho (x,y) ^p d\mathbb{P}(x) < \infty , \forall y \in M \} $ are two probability measures on $M$ with finite $p{\text{-th}}$ moment and $\Gamma(\mathbb{P}, \mathbb{Q})$ is the set of all measures on $M \times M$ with marginals $\mathbb{P}$ and $\mathbb{Q}$. Wasserstein metric arises in the problem of optimal transport: $\mu(x,y)$ can be viewed as a randomized policy for transporting a unit quantity of some material from a random location $x$ to another location $y$ while satisfying the marginal constraint $x \sim \mathbb{P}$ and $y \sim \mathbb{Q}$. If the cost of transporting a unit of material from $x \in \mathbb{P}$ to $y \in \mathbb{Q}$ is given by $\rho(x,y)^p$, then $W_p(\mathbb{P}, \mathbb{Q})$ is the minimum expected transport cost.

The Kantorovich-Rubinstein theorem shows that when $M$ is separable, the dual representation of the first Wasserstein distance (Earth-Mover distance) can be written as a form of integral probability metric \cite{villani2008optimal}
\begin{equation}
W_1(\mathbb{P},\mathbb{Q})= \sup_{\left \| f \right \|_L \leq 1} \mathbb{E}_{x \sim \mathbb{P}}[f(x)] - \mathbb{E}_{x \sim \mathbb{Q}}[f(x)], \label{eq:w1-distance}
\end{equation}
where the Lipschitz semi-norm is defined as $\left \| f \right \|_L = \sup{|f(x) - f(y)|} / \rho(x,y)$. In this paper, for simplicity, Wasserstein distance represents the first Wasserstein distance.

\section{Wasserstein Distance Guided \\ Reprensentation Learning}

\subsection{Problem Definition}

In unsupervised domain adaptation problem, we have a labeled source dataset $X^s=\{(x^s_i,y^s_i)\}_{i=1}^{n^s}$ of $n^s$ samples from the source domain $\mathcal{D}_s$ which is assumed sufficient to train an accurate classifier, and an unlabeled target dataset $X^t = \{x_j^t\}^{n^t}_{j=1}$ of $n^t$ samples from the target domain $\mathcal{D}_t$. It is assumed that the two domains share the same feature space but follow different marginal data distributions, $\mathbb{P}_{x^s}$ and $\mathbb{P}_{x^t}$ respectively. The goal is to learn a transferable classifier $\eta(x)$ to minimize target risk $\epsilon_{t} = \mathrm{Pr}_{(x,y) \sim \mathcal{D}_t} [ \eta(x) \neq y ]$ using all the given data.

\subsection{Domain Invariant Representation Learning}

The challenge of unsupervised domain adaptation mainly lies in the fact that two domains have different data distributions. Thus the model trained with source domain data may be highly biased in the target domain. To solve this problem, we propose a new approach to learn feature representations invariant to the change of domains by minimizing empirical Wasserstein distance between the source and target representations through adversarial training.

In our adversarial representation learning approach, there is a feature extractor which can be implemented by a neural network. The feature extractor is supposed to learn the domain invariant feature representations from both domains. Given an instance $x \in \mathbb{R}^{m}$ from either domain,  the feature extractor learns a function $f_g: \mathbb{R}^m \rightarrow \mathbb{R}^d$ that maps the instance to a $d$-dimensional representation with corresponding network parameter $\theta_g$. And then in order to reduce the discrepancy between the source and target domains, we use the domain critic, as suggested in \cite{arjovsky2017wasserstein}, whose goal is to estimate the Wasserstein distance between the source and target representation distributions. Given a feature representation $h = f_g(x)$ computed by the feature extractor, the domain critic learns a function $f_w:\mathbb{R}^d \rightarrow \mathbb{R}$ that maps the feature representation to a real number with parameter $\theta_w$. Then the Wasserstein distance between two representation distributions $\mathbb{P}_{h^s}$ and $\mathbb{P}_{h^t}$, where $h^s = f_g(x^s)$ and $h^t=f_g(x^t)$, can be computed according to Eq.~(\ref{eq:w1-distance})
{\small
\begin{equation}
\begin{aligned}
W_1(\mathbb{P}_{h^s},\mathbb{P}_{h^t}) & = \sup_{\left \| f_w \right \|_L \leq 1} \mathbb{E}_{\mathbb{P}_{h^s}} [f_w(h)] - \mathbb{E}_{\mathbb{P}_{h^t}  }[f_w(h)] \\
	&=  \sup_{\left \| f_w \right \|_L \leq 1} \mathbb{E}_{\mathbb{P}_{x^s}} [f_w(f_g(x))] - \mathbb{E}_{\mathbb{P}_{x^t}  }[f_w(f_g(x))].  \label{wd-for-da}
\end{aligned}   
\end{equation}          
}
If the parameterized family of domain critic functions $\{f_w\}$ are all $1$-Lipschitz, then we can approximate the empirical Wasserstein distance by maximizing the domain critic loss $\mathcal{L}_{wd} $ with respect to parameter $\theta_w$
\begin{equation}
\mathcal{L}_{wd}(x^s, x^t) \! = \! \frac{1}{n^s} \! \sum_{x^s \in X^s} \! f_w(f_g(x^s)) \!-\! \frac{1}{n^t} \! \sum_{x^t \in X^t} \! f_w(f_g(x^t)).
\end{equation}
Here comes the question of enforcing the Lipschitz constraint. \cite{arjovsky2017wasserstein} proposed to clip the weights of domain critic within a compact space $[-c, c]$ after each gradient update. However \cite{gulrajani2017improved} pointed out that weight clipping will cause capacity underuse and gradient vanishing or exploding problems. As suggested in \cite{gulrajani2017improved}, a more reasonable way is to enforce gradient penalty $\mathcal{L}_{grad}$ for the domain critic parameter $\theta_w$
\begin{equation}
\mathcal{L}_{grad}(\hat{h}) =(\| \nabla_{\hat{h}} f_w(\hat{h}) \|_2-1)^2,
\end{equation}
where the feature representations $\hat{h} $ at which to penalize the gradients are defined not only at the source and target representations but also at the random points along the straight line between source and target representation pairs. So we can finally estimate the empirical Wasserstein distance by solving the problem
\begin{equation}
\max_{\theta_w} \{ \mathcal{L}_{wd} - \gamma \mathcal{L}_{grad}\}
\end{equation}
where $\gamma$ is the balancing coefficient.

Since the Wasserstein distance is continuous and differentiable almost everywhere, we can first train the domain critic to optimality. Then by fixing the optimal parameter of domain critic and minimizing the estimator of Wasserstein distance, the feature extractor network can learn feature representations with domain discrepancy reduced. Up to now the representation learning can be achieved by solving the minimax problem
\begin{equation}
\min_{\theta_g}\max_{\theta _w} \{\mathcal{L}_{wd} - \gamma \mathcal{L}_{grad} \}
\end{equation}
where $\gamma$ should be set $0$ when optimizing the minimum operation since the gradient penalty should not guide the representation learning process. By iteratively learning feature representations with lower Wasserstein distance, the adversarial objective can finally learn domain invariant feature representations.

\subsection{Combining with Discriminator}

As mentioned above, our final goal is to learn a high-performance classifier for the target domain. However, the process of WDGRL is in an unsupervised setting, which may result in that the learned domain invariant representations are not discriminative enough. 
Hence it is necessary to incorporate the supervision signals of source domain data into the representation learning process as in DANN \cite{ganin2016domain}. Next we further introduce the combination of the representation learning approaches and a discriminator, of which the overview framework is given by Figure~\ref{fig:wd-tl}.  A detailed algorithm of the combination is given in Algorithm~\ref{alg:framework}.

\begin{figure}[tpb]
	\centering 
	\includegraphics[width=0.45\textwidth]{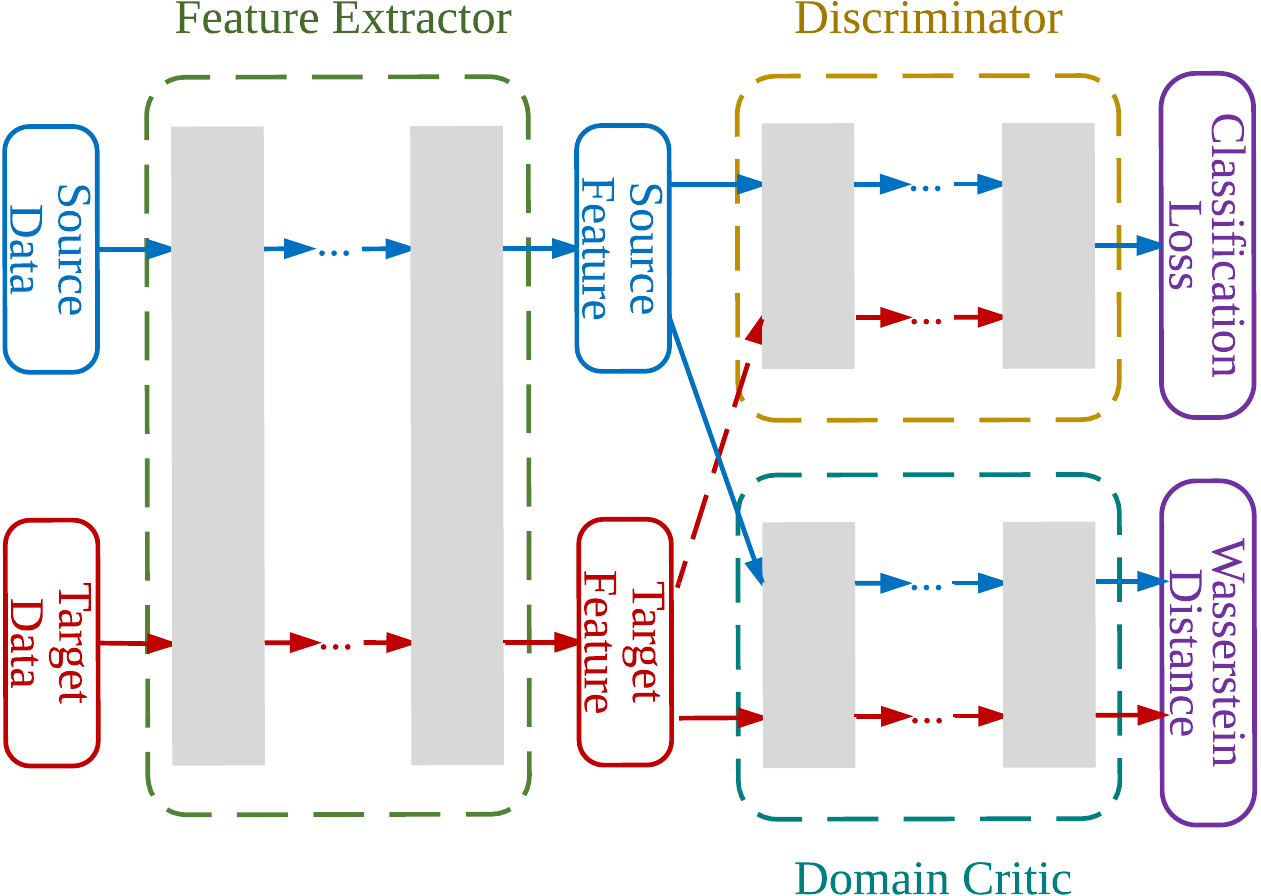}
	\caption{WDGRL Combining with Discriminator.}
	\label{fig:wd-tl}
\end{figure}

We further add several layers as the discriminator after the feature extractor network. Since WDGRL guarantees transferability of the learned representations, the shared discriminator can be directly applied to target domain prediction when training finished. 
The objective of the discriminator $f_c : \mathbb{R}^d \rightarrow \mathbb{R}^ l $ is to compute the softmax prediction with parameter $\theta_c $ where $l$ is the number of classes. The  discriminator loss function is defined as the cross-entropy between the predicted probabilistic distribution and the one-hot encoding of the class labels given the labeled source data:
\begin{equation}
\mathcal{L}_c(x^s, y^s) = -\frac{1}{n^s} \sum_{i=1}^{n^s} \sum_{k=1}^l 1(y^s_i = k) \cdot \log {f_c(f_g(x_i^s))}_k ,
\end{equation}
where $1(y^s_i = k)$ is the indicator function and ${f_c(f_g(x_i^s))}_k$ corresponds to the $k$-th dimension value of the distribution $f_c(f_g(x_i^s))$.
By combining the discriminator loss, we attain our final objective function
\begin{equation}
\min_{ \theta_g, \theta_c}  \Big\{ \mathcal{L}_c  + \lambda \max_{\theta _w}\Big[ \mathcal{L}_{wd} - \gamma \mathcal{L}_{grad} \Big] \Big\},
\end{equation}
where $\lambda $ is the coefficient that controls the balance between discriminative and transferable feature learning and $\gamma$ should be set 0 when optimizing the minimum operator.

Note that this algorithm can be trained by the standard back-propagation with two iterative steps. In a mini-batch containing labeled source data and unlabeled target data, we first train the domain critic network to optimality by optimizing the max operator via gradient ascent and then update the feature extractor by minimizing the classification loss computed by labeled source data and the estimated Wasserstein distance simultaneously. The learned representations can be domain invariant and target discriminative since the parameter $\theta_g$ receives the gradients from both the domain critic and the discriminator loss. 

\begin{algorithm}[t]
	\caption{Wasserstein Distance Guided Representation Learning Combining with Discriminator}\label{alg:framework}
	\begin{algorithmic}[1]
		\small
		\REQUIRE
		source data $X^s$; target data $X^t$; minibatch size $m$; critic training step $n$; coefficient $\gamma$, $\lambda$; learning rate for domain critic $\alpha_1$; learning rate for classification and feature learning $\alpha_2$
		\STATE
		Initialize feature extractor, domain critic, discriminator with random weights $\theta_g, \theta_w, \theta_c$
		\REPEAT
		\STATE	
		Sample minibatch $\{ x^s_i, y^s_i \}_{i=1}^m $, $\{x^t_i \}_{i=1}^m $ from $X^s$ and $X^t$
		\FOR{$t=1,...,n $}
		\STATE
		$h^s \leftarrow f_g(x^s) $, $h^t \leftarrow f_g(x^t) $
		\STATE
		Sample $h$ as the random points along straight lines between $h^s$ and $h^t$ pairs
		\STATE
		$\hat{h} \leftarrow \{h^s, h^t, h \}$
		\STATE
		$\theta_w \leftarrow \theta_w + \alpha_1 \nabla_{\theta_w}[\mathcal{L}_{wd}(x^s, x^t) - \gamma \mathcal{L}_{grad}(\hat{h})]$
		\ENDFOR
		\STATE
		$\theta_c \leftarrow \theta_c - \alpha_2 \nabla_{\theta_c}\mathcal{L}_c(x^s, y^s)$
		\STATE
		$\theta_g \leftarrow \theta_g - \alpha_2 \nabla_{\theta_g}[\mathcal{L}_c(x^s, y^s) + \mathcal{L}_{wd}(x^s, x^t)]$
		\UNTIL{$\theta_g, \theta_w, \theta_c$ converge}
	\end{algorithmic}
\end{algorithm}

\subsection{Theoretical Analysis}

In this section, we give some theoretical analysis about the advantages of using Wasserstein distance for domain adaptation. 

\subsubsection{Gradient Superiority}

In domain adaptation, to minimize the divergence between the data distributions $\mathbb{P}_{x^s}$ and $\mathbb{P}_{x^t}$, the symmetric feature-based methods learn a transformation function to map the data from the original space to a common latent space with a distance measure. There are two situations after the mapping:  
i). The two mapped feature distributions have supports that lie on low dimensional manifolds \cite{narayanan2010sample} in the latent space. In such situation, there will be a gradient vanishing problem if adopting the domain classifier to make data indistinguishable while Wasserstein distance could provide reliable gradients \cite{arjovsky2017wasserstein}.
ii). The feature representations may fill in the whole space since the feature mapping usually reduces dimensionality. However, if a data point lies in the regions where the probability of one distribution could be ignored compared with the other distribution, it makes no contributions to the gradients with traditional cross-entropy loss since the gradient computed by this data point is almost $0$. If we adopt Wasserstein distance as the distance measure, stable gradients can be provided wherever. So theoretically in either situation, WDGRL can perform better than previous adversarial adaptation methods \cite{ganin2016domain,tzeng2017adversarial}.

\subsubsection{Generalization Bound}

\cite{redko2016theoretical} proved that the target error can be bounded by the Wasserstein distance for empirical measures. However, the generalization bound exists when assuming the hypothesis class is a unit ball in RKHS and the transport cost function is RKHS distance. In this paper we prove the generalization bound in terms of the Kantorovich-Rubinstein dual formulation under a different assumption.

We first formalize some notations that will be used in the following statements. Let $\mathcal{X}$ be an instance set and $\{0, 1 \}$ be the label set for binary classification. We denote by $\mu_s$ the distribution of source instances on $\mathcal{X}$ and use $\mu_t$ for the target domain. We denote that two domains have the same labeling function $f:\mathcal{X} \rightarrow [0,1]$ which is always assumed to hold in domain adaptation problem. A hypothesis class $H$ is a set of predictor functions, $\forall h \in H, h:\mathcal{X} \rightarrow [0,1] $. The probability according to the distribution $\mu_s$ that a hypothesis $h$ disagrees with the labeling function $f$ (which can also be a hypothesis) is defined as $\epsilon_s(h,f) = \mathbb{E}_{x\in \mu_s}[|h(x)-f(x)|]$. We use the shorthand $\epsilon_s(h) = \epsilon_s(h,f)$ and $\epsilon_t(h)$ is defined the same. We now present the Lemma that introduces Wasserstein distance to relate the source and target errors.

\begin{lemma}
	\label{lemma: wd-lip}
	Let $ \mu_s,\mu_t \in \mathcal{P}(\mathcal{X}) $ be two probability measures. Assume the hypotheses $ h \in H$ are all $K$-Lipschitz continuous for some $K$. Then the following holds 
	\begin{equation}
	\epsilon_t(h, h') \leq \epsilon_s(h,h') + 2KW_1(\mu_s,\mu_t)
	\end{equation}
	for every hypothesis $h,h' \in H$.
\end{lemma}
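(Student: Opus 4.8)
The plan is to recognize that the quantity $\epsilon_t(h,h') - \epsilon_s(h,h')$ is exactly a difference of expectations of a single fixed function evaluated under the two measures, and then to invoke the Kantorovich-Rubinstein dual formulation in Eq.~(\ref{eq:w1-distance}). Concretely, I would set $g(x) = |h(x) - h'(x)|$, so that directly from the definition of the errors we have $\epsilon_t(h,h') - \epsilon_s(h,h') = \mathbb{E}_{x \sim \mu_t}[g(x)] - \mathbb{E}_{x \sim \mu_s}[g(x)]$. The entire argument then reduces to bounding this expectation gap by $2K W_1(\mu_s,\mu_t)$.

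The key step, and the one I expect to be the crux, is verifying the Lipschitz constant of $g$. Since each of $h$ and $h'$ is $K$-Lipschitz, the triangle inequality gives $|(h-h')(x) - (h-h')(y)| \leq K\rho(x,y) + K\rho(x,y) = 2K\rho(x,y)$, so that $h - h'$ is $2K$-Lipschitz. Because the absolute value map is $1$-Lipschitz, composing it with $h-h'$ preserves the constant, whence $g = |h - h'|$ is itself $2K$-Lipschitz. This $2K$ is precisely where the factor in the stated bound originates, so this is the step to pin down carefully; everything else is bookkeeping.

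With the Lipschitz bound in hand, I would normalize: $g/(2K)$ has Lipschitz semi-norm at most $1$, hence it is an admissible test function for the supremum in Eq.~(\ref{eq:w1-distance}). Applying the dual formula to the pair $(\mu_t, \mu_s)$ yields $\mathbb{E}_{x \sim \mu_t}[g(x)/(2K)] - \mathbb{E}_{x \sim \mu_s}[g(x)/(2K)] \leq W_1(\mu_t, \mu_s)$. Multiplying through by $2K$ and using the symmetry $W_1(\mu_t,\mu_s) = W_1(\mu_s,\mu_t)$ (which holds since the cost $\rho$ is symmetric) gives $\mathbb{E}_{x \sim \mu_t}[g(x)] - \mathbb{E}_{x \sim \mu_s}[g(x)] \leq 2K W_1(\mu_s,\mu_t)$. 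Substituting back the identification from the first step produces $\epsilon_t(h,h') \leq \epsilon_s(h,h') + 2K W_1(\mu_s,\mu_t)$, which is the claim. Notably, the argument needs no hypotheses beyond the $K$-Lipschitz continuity already assumed and the separability of $\mathcal{X}$ implicit in the validity of the dual representation.
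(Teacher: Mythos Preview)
Your proposal is correct and follows essentially the same approach as the paper: both arguments establish that $|h-h'|$ is $2K$-Lipschitz via the triangle inequality and then bound the expectation gap using the Kantorovich--Rubinstein dual representation. Your version is slightly more explicit about the normalization step $g/(2K)$ and the symmetry of $W_1$, whereas the paper absorbs this into a one-line supremum over $\|f\|_L \leq 2K$, but the underlying argument is identical.
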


\begin{proof}
	We first prove that for every $K$-Lipschitz continuous hypotheses $h,h' \in H$, $|h-h'|$ is  $2K$-Lipschitz continuous. Using the triangle inequality, we have
	{\small
		\begin{equation}
		\begin{aligned}
		|h(x) \! - \! h'(x)|& \! \leq \! |h(x) \! - \! h(y)| \! + \! |h(y) \! - \! h'(x)| \\
		& \! \leq \! |h(x) \! - \! h(y)| \! + \! |h(y) \!- \! h'(y)| \! + \! |h'(x) \! - \! h'(y)|
		\end{aligned}
		\end{equation}
	}
	and thus for every $x,y \in \mathcal{X}$,
	{\small
		\begin{equation}
		\begin{aligned}
		\frac{|h(x) \! - \! h'(x)| \! - \! |h(y) \! - \! h'(y)|}{\rho(x,y)}  &  \! \leq \! \frac{|h(x) \! - \! h(y)| \! + \! |h'(x) \! - \! h'(y)|}{\rho(x,y)} \\ & \leq 2K.
		\end{aligned}
		\end{equation}
	}
	Then for every hypothesis $h,h'$, we have
	{\small
		\begin{equation}
		\begin{aligned}
		\epsilon_t(h,h') \! - \! \epsilon_s(h,h') & \! = \! \mathbb{E}_{\mu_t}[|h(x) \! - \! h'(x)|] \! - \! \mathbb{E}_{\mu_s}[|h(x) \! - \! h'(x)|]  \\
		& \! \leq \! \sup_{\left \| f \right \|_L \leq 2K} \mathbb{E}_{\mu_t}[f(x)] \! - \! \mathbb{E}_{\mu_s}[f(x)] \\
		& \! = \! 2K W_1(\mu_s,\mu_t)
		\end{aligned}
		\end{equation}
	}
\end{proof}

\begin{theorem}
	\label{theo: wd-bound}
	Under the assumption of Lemma~\ref{lemma: wd-lip}, for every $h \in H$ the following holds
	\begin{equation}
	\epsilon_t(h) \leq \epsilon_s(h) + 2KW_1(\mu_s,\mu_t) + \lambda
	\end{equation}
	where $\lambda$ is the combined error of the ideal hypothesis $h^*$ that minimizes the combined error $\epsilon_s(h)+\epsilon_t(h)$.
\end{theorem}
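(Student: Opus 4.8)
The plan is to reduce everything to two ingredients: the triangle inequality satisfied by the error $\epsilon(\cdot,\cdot)$, and Lemma~\ref{lemma: wd-lip} applied to a well-chosen pair of hypotheses. First I would record that $\epsilon$ behaves like a pseudo-metric on hypotheses: since $\epsilon_D(h,h') = \mathbb{E}_{x\sim\mu_D}[|h(x)-h'(x)|]$ is an expected absolute difference, the pointwise bound $|h(x)-f(x)| \le |h(x)-h^*(x)| + |h^*(x)-f(x)|$ survives taking expectations (by linearity and monotonicity), giving $\epsilon_D(h,f) \le \epsilon_D(h,h^*) + \epsilon_D(h^*,f)$ for each $D \in \{s,t\}$. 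This is the only structural fact I need about the loss, and it holds on both domains simultaneously.

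Next I would set $h^*$ to be the minimizer over $H$ of $\epsilon_s(h)+\epsilon_t(h)$, so that $\lambda = \epsilon_s(h^*)+\epsilon_t(h^*)$, and then chain inequalities starting from the target risk. Applying the triangle inequality on the target gives $\epsilon_t(h) = \epsilon_t(h,f) \le \epsilon_t(h,h^*) + \epsilon_t(h^*)$. The central step is to transport the cross term $\epsilon_t(h,h^*)$ to the source domain by invoking Lemma~\ref{lemma: wd-lip} on the pair $(h,h^*)$, yielding $\epsilon_t(h,h^*) \le \epsilon_s(h,h^*) + 2KW_1(\mu_s,\mu_t)$. Finally I would split $\epsilon_s(h,h^*)$ back using the source-side triangle inequality, $\epsilon_s(h,h^*) \le \epsilon_s(h,f) + \epsilon_s(h^*,f) = \epsilon_s(h) + \epsilon_s(h^*)$, and collect terms to obtain $\epsilon_t(h) \le \epsilon_s(h) + 2KW_1(\mu_s,\mu_t) + \big(\epsilon_s(h^*)+\epsilon_t(h^*)\big)$, where the parenthesized quantity is exactly $\lambda$.

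The computation itself is routine; the one point that needs care is the applicability of Lemma~\ref{lemma: wd-lip}, which requires \emph{both} of its arguments to be $K$-Lipschitz members of $H$. This is precisely why $h^*$ must be chosen inside $H$, as its definition as a minimizer over $H$ guarantees, so that $h,h^*\in H$ and the lemma's hypothesis holds for the pair; without $h^*\in H$ one could not transport the $\epsilon_t(h,h^*)$ term across domains. I would therefore make this membership explicit before applying the lemma. Beyond this bookkeeping I do not anticipate a genuine obstacle, since the statement is a direct Wasserstein analogue of the classical domain-adaptation decomposition and every step is a single application of either the triangle inequality or Lemma~\ref{lemma: wd-lip}.
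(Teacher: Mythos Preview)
Your proposal is correct and follows essentially the same route as the paper: triangle inequality on the target to introduce $h^*$, Lemma~\ref{lemma: wd-lip} on the pair $(h,h^*)$ to move $\epsilon_t(h,h^*)$ to the source, then the source-side triangle inequality and collecting terms into $\lambda$. The paper presents the middle step as an add--and--subtract of $\epsilon_s(h,h^*)$ before bounding the difference, but this is exactly your application of the lemma; your explicit remark that $h^*\in H$ is needed for Lemma~\ref{lemma: wd-lip} to apply is a point the paper leaves implicit.
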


\begin{proof}
	\begin{equation}
	\begin{aligned}
	\epsilon_t(h) & \leq \epsilon_t(h^*) + \epsilon_t(h^*, h) \\
	& = \epsilon_t(h^*) + \epsilon_s(h, h^*) + \epsilon_t(h^*,h) - \epsilon_s(h,h^*) \\
	& \leq \epsilon_t(h^*) + \epsilon_s(h, h^*) + 2KW_1(\mu_s, \mu_t)  \\
	& \leq \epsilon_t(h^*) + \epsilon_s(h) + \epsilon_s(h^*) + 2KW_1(\mu_s, \mu_t) \\
	& = \epsilon_s(h) + 2KW_1(\mu_s,\mu_t) + \lambda
	\end{aligned}
	\end{equation}
\end{proof}

Thus the generalization bound of applying Wasserstein distance between domain distributions has been proved, while the proof of using empirical measures on the source and target domain samples can be further proved according to Theorem 2.1 in \cite{bolley2007quantitative} as the same way in \cite{redko2016theoretical}.

The assumption made here is to specify the hypothesis class is $K$-Lipschitz continuous for some $K$. While it may seem too restrictive, in fact the hypotheses are always implemented by neural networks where the basic linear mapping functions and the activation functions such as sigmoid and relu are all Lipschitz continuous, so the assumption is not that strong and can be fulfilled. And the weights in neural networks are always regularized to avoid overfitting which means the constant $K$ will not be too large. Compared with the proof in \cite{redko2016theoretical} the assumptions are different and can be used for different cases.

\subsection{Application to Adaptation Frameworks}

WDGRL can be integrated into existing feature-based domain adaptation frameworks \cite{tzeng2014deep,long2015learning,zhuang2015supervised,long2016deep,bousmalis2016domain}. These frameworks are all symmetric feature-based and aim to learn domain invariant feature representations for adaptation using divergence measures such as MMD and DANN. We provide a promising alternative WDGRL to learn domain invariant representations, which can replace the MMD or DANN. We should point out that although WDGRL has gradient advantage over DANN, it takes more time to estimate the Wasserstein distance. 
%For different adaptation tasks, these approaches can have different performance so the choice should be determined by the specific adaptation task. 
Although we only apply WDGRL on one hidden layer, it can also be applied on multilayer structures as implemented in \cite{long2015learning}.

\section{Experiments}

In this section, we evaluate the efficacy of our approach on sentiment and image classification adaptation datasets. Compared with other domain invariant representation learning approaches, WDGRL achieves better performance on average. Furthermore, we visualize the feature representations learned by these approaches for an empirical analysis.

\subsection{Datasets}

\textbf{Amazon review benchmark dataset.} The Amazon review dataset\footnote{https://www.cs.jhu.edu/\textasciitilde mdredze/datasets/sentiment/} \cite{blitzer2007biographies} is one of the most widely used benchmarks for domain adaptation and sentiment analysis. It is collected from product reviews from Amazon.com and contains four types (domains), namely books (B), DVDs (D), electronics (E) and kitchen appliances (K). For each domain, there are 2,000 labeled reviews and approximately 4,000 unlabeled reviews (varying slightly across domains) and the classes are balanced. In our experiments, for easy computation, we follow \cite{chen2012marginalized} to use the 5,000  most frequent terms of unigrams and bigrams as the input and totally $A_4^2=12$ adaptation tasks are constructed.

\textbf{Office-Caltech object recognition dataset.} The Office-Caltech dataset\footnote{https://cs.stanford.edu/\textasciitilde jhoffman/domainadapt/} released by \cite{gong2012geodesic} is comprised of 10 common categories shared by the Office-31 and Caltech-256 datasets. In our experiments, we construct 12 tasks across 4 domains: Amazon (A), Webcam (W), DSLR (D) and Caltech (C), with 958, 295, 157 and 1,123 image samples respectively. In our experiments, Decaf features are used as the input. Decaf features \cite{donahue2014decaf} are the 4096-dimensional FC7-layer hidden activations extracted by the deep convolutional neural network AlexNet. 

\subsection{Compared Approaches}
We mainly compare our proposed approach with domain adversarial neural network (DANN) \cite{ganin2016domain}, maximum mean discrepancy metric (MMD) \cite{gretton2012kernel} and deep correlation alignment (CORAL) \cite{sun2016deep} since these approaches and our proposed WDGRL all aim at learning the domain invariant feature representations, which are crucial to reduce the domain discrepancy. Other domain adaptation frameworks \cite{bousmalis2016domain,tzeng2014deep,long2015learning,long2016deep,zhuang2015supervised} are not included in the comparison, because these frameworks focus on adaptation architecture design and all compared approaches can be easily integrated into these frameworks.

\textbf{S-only}: As an empirical lower bound, we train a model using the labeled source data only, and test it on the target test data directly. 

\textbf{MMD}: The MMD metric is a measurement of the divergence between two probability distributions from their samples by computing the distance of mean embeddings in RKHS.

\textbf{DANN}: DANN is an adversarial representation learning approach that a domain classifier aims at distinguishing the learned source/target features while the feature extractor tries to confuse the domain classifier. The minimax optimization is solved via a gradient reversal layer (GRL).

\textbf{CORAL}: Deep correlation alignment minimizes domain discrepancy by aligning the second-order statistics of the source and target distributions and can be applied to the layer activations in neural networks.

\subsection{Implementation Details}

We implement all our experiments\footnote{Experiment code: https://github.com/RockySJ/WDGRL.} using TensorFlow and the models are all trained with Adam optimizer. We follow the evaluation protocol in \cite{Long_2013_ICCV} and evaluate all compared approaches through grid search on the hyperparameter space, and report the best results of each approach. For each approach we use a batch size of 64 samples in total with 32 samples from each domain, and a fixed learning rate $10^{-4}$. All compared approaches are combined with a discriminator to learn both domain invariant and discriminative representations and to conduct the classification task. 

We use standard multi-layer perceptron (MLP) as the basic network architecture. MLP is sufficient to handle all the problems in our experiments. For Amazon review dataset the network is designed with one hidden layer of 500 nodes, relu activation function and softmax output function, while the network for Office-Caltech dataset has two hidden layers of 500 and 100 nodes. For each dataset the same network architecture is used for all compared approaches and these approaches are all applied on the last hidden layer.

For the MMD experiments we follow the suggestions of \cite{bousmalis2016domain} and use a linear combination of 19 RBF kernels with the standard deviation parameters ranging from \(10^{-6}\) to \(10^6\). As for DANN implementation, we add a gradient reversal layer (GRL) and then a domain classifier with one hidden layer of 100 nodes. And the CORAL approach computes a distance between the second-order statistics (covariances) of the source and target features and the distance is defined as the squared Frobenius norm. For each approach, the corresponding loss term is added to the classification loss with a coefficient for the trade-off. And the coefficients are tuned different to achieve the best results for each approach.

Our approach is easy to implement according to Algorithm~\ref{alg:framework}. In our experiments, the domain critic network is designed with a hidden layer of 100 nodes. The training steps $n$ is 5 which is chosen for fast computation and sufficient optimization guarantee for the domain critic, and the learning rate for the domain critic is $10^{-4}$. We penalize the gradients not only at source/target representations but also at the random points along the straight line between the source and target pairs and the coefficient $\gamma$ is set to 10 as suggested in \cite{gulrajani2017improved}. 

\subsection{Results and Discussion}

\begin{table}[t]
	\small
	\caption{Performance (accuracy \%) on Amazon review dataset.} \label{tab:amazon-result}
	\centering
	%\resizebox{0.9\textwidth}{!}{
	\begin{tabular}{cccccc}
		\hline 
		& S-only & MMD & DANN & CORAL & WDGRL \\
		\hline
		B \(\rightarrow \) D & 81.09 & 82.57& 82.07 &82.74& \textbf{83.05} \\
		B \(\rightarrow \) E & 75.23 & 80.95& 78.98 &82.93& \textbf{83.28} \\
		B \(\rightarrow \) K & 77.78 & 83.55& 82.76 &84.81& \textbf{85.45} \\
		\hline
		D \(\rightarrow \) B & 76.46 & 79.93& 79.35 &\textbf{80.81}& 80.72 \\
		D \(\rightarrow \) E & 76.24 & 82.59& 81.64 &83.49& \textbf{83.58} \\
		D \(\rightarrow \) K & 79.68 & 84.15& 83.41 &85.35& \textbf{86.24} \\
		\hline
		E \(\rightarrow \) B & 73.37 & 75.72& 75.95 &76.91& \textbf{77.22} \\
		E \(\rightarrow \) D & 73.79 & 77.69& 77.58 &78.08& \textbf{78.28} \\
		E \(\rightarrow \) K & 86.64 & 87.37& 86.63 &87.87& \textbf{88.16} \\
		\hline
		K \(\rightarrow \) B & 72.12 & 75.83& 75.81 &76.95& \textbf{77.16} \\
		K \(\rightarrow \) D & 75.79 & 78.05& 78.53 &79.11& \textbf{79.89} \\
		K \(\rightarrow \) E & 85.92 & 86.27& 86.11 &\textbf{86.83}& 86.29 \\
		\hline
		AVG & 77.84 & 81.22 & 80.74 & 82.16 & \textbf{82.43} \\
		\hline
	\end{tabular}
	%	}
\end{table}

\textbf{Amazon review benchmark dataset.} The challenge of cross domain sentiment analysis lies in the distribution shift as different words are used in different domains.  Table~\ref{tab:amazon-result} shows the detailed comparison results of these approaches in 12 transfer tasks. As we can see, our proposed WDGRL outperforms all other compared approaches in 10 out of 12 domain adaptation tasks, and it achieves the second highest scores in the remaining 2 tasks. We find that as adversarial adaptation approaches, WDGRL outperforms DANN, which is consistent with our theoretical analysis that WDGRL has more reliable gradients. MMD and CORAL are both non-parametric and have lower computational cost than WDGRL, while their classification performances are also lower than WDGRL.

\textbf{Office-Caltech object recognition dataset.} Table~\ref{tab:office-result} shows the results of our experiments on Office-Caltech dataset. We observe that our approach achieves better performance than other compared approaches on most tasks. Office-Caltech dataset is small since there are only hundreds of images in one domain and it is a 10-class classification problem. Thus we can draw a conclusion that the empirical Wasserstein distance can also be applied to small-scale datasets adaptation effectively. We note that CORAL performs better than MMD in Amazon review dataset while it performs worse than MMD in Office-Caltech dataset. A possible reason is that the reasonable covariance alignment approach requires large samples. On the other hand, we can see that these different approaches have different performances on different adaptation tasks.

\begin{table}[t]
	\small
	\caption{Performance (accuracy \%) on Office-Caltech dataset with Decaf features.}
	\centering\label{tab:office-result}
	\begin{tabular}{cccccc}
		\hline 
		& S-only & MMD & DANN & CORAL & WDGRL \\
		\hline
		A \(\rightarrow \) C & 84.55 & \textbf{88.62} & 87.80 & 86.18 & 86.99 \\
		A \(\rightarrow \) D & 81.05 & 90.53 & 82.46 & 91.23 & \textbf{93.68} \\
		A \(\rightarrow \) W & 75.59 & \textbf{91.58} & 77.81 & 90.53 & 89.47 \\
		\hline
		W \(\rightarrow \) A & 79.82 & 92.22 & 82.98 & 88.39 & \textbf{93.67 }\\
		W \(\rightarrow \) D & 98.25 & \textbf{100} & \textbf{100} & \textbf{100} & \textbf{100} \\
		W \(\rightarrow \) C & 79.67 & 88.62 & 81.30 & 88.62 & \textbf{89.43} \\
		\hline
		D \(\rightarrow \) A & 84.56 & 90.11 & 84.70 & 85.75 & \textbf{91.69} \\
		D \(\rightarrow \) W & 96.84 & \textbf{98.95} &\textbf{ 98.95} & 97.89 & 97.89 \\
		D \(\rightarrow \) C & 80.49 & 87.80 & 82.11 & 85.37 & \textbf{90.24} \\
		\hline
		C \(\rightarrow \) A & 92.35 & 93.14 & 93.27 & 93.01 & \textbf{93.54} \\
		C \(\rightarrow \) W & 84.21 & 91.58 & 89.47 & \textbf{92.63} & 91.58 \\
		C \(\rightarrow \) D & 87.72 & 91.23 & 91.23 & 89.47 & \textbf{94.74} \\
		\hline
		AVG & 85.44 & 92.03 & 87.67 & 90.76 & \textbf{92.74} \\
		\hline
	\end{tabular}
\end{table}

\subsection{Feature Visualization}

We randomly choose the D\(\rightarrow \)E domain adaptation task of Amazon review dataset and plot in Figure~\ref{fig:t-sne} the t-SNE visualization following \cite{donahue2014decaf,long2016deep} to visualize the learned feature representations. In these figures, red and blue points represent positive and negative samples of the source domain, purple and green points represent positive and negative samples of the target domain. A transferable feature mapping should cluster red (blue) and purple (green) points together, and meanwhile classification can be easily conducted between purple and green points. We can see that almost all approaches learn discriminative and domain invariant feature representations to some extent. And representations learned by WDGRL are more transferable since the classes between the source and target domains align better and the region where purple and green points mix together is smaller.

\begin{figure}[tbp]
	\subfigure[t-SNE of DANN features]{
		\includegraphics[width=0.22\textwidth]{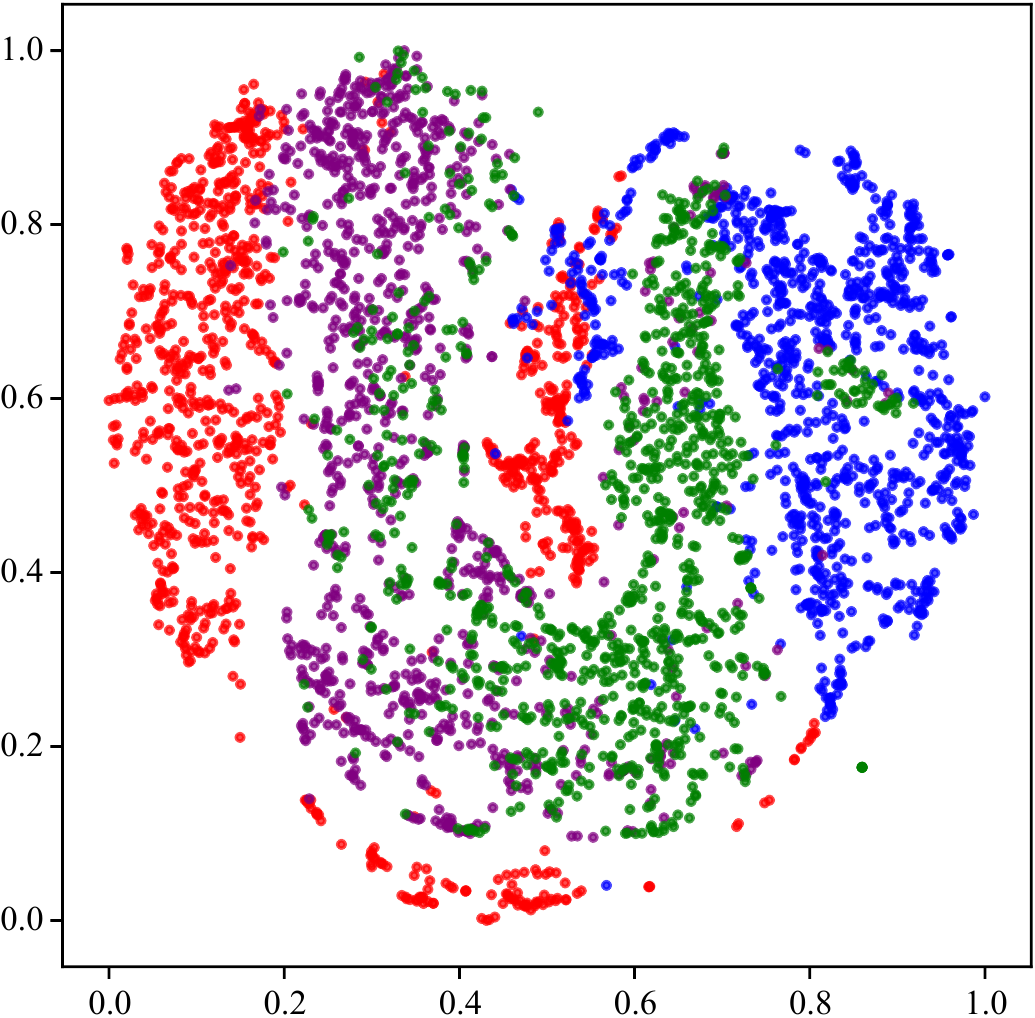}
	}
	\subfigure[t-SNE of MMD features]{
		\includegraphics[width=0.22\textwidth]{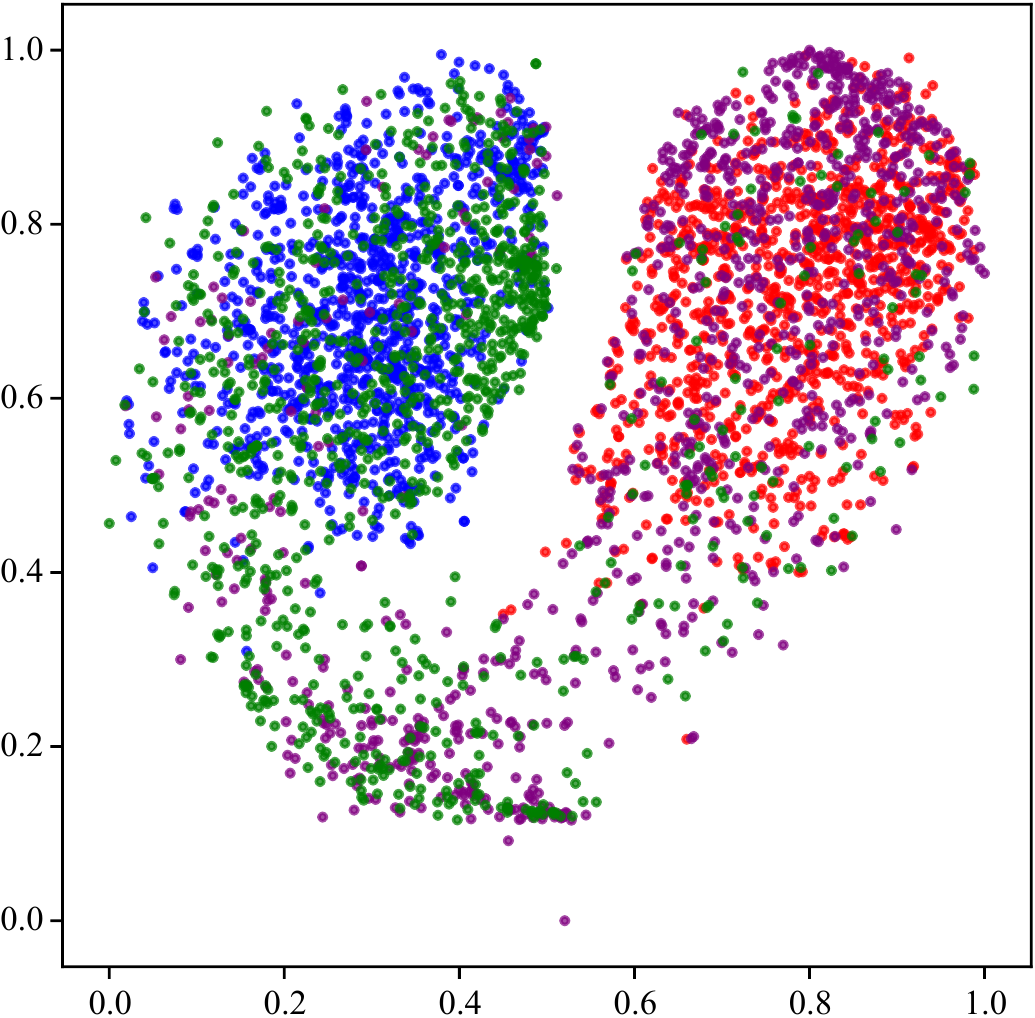}
	}
	
	\subfigure[t-SNE of CORAL features]{
		\includegraphics[width=0.22\textwidth]{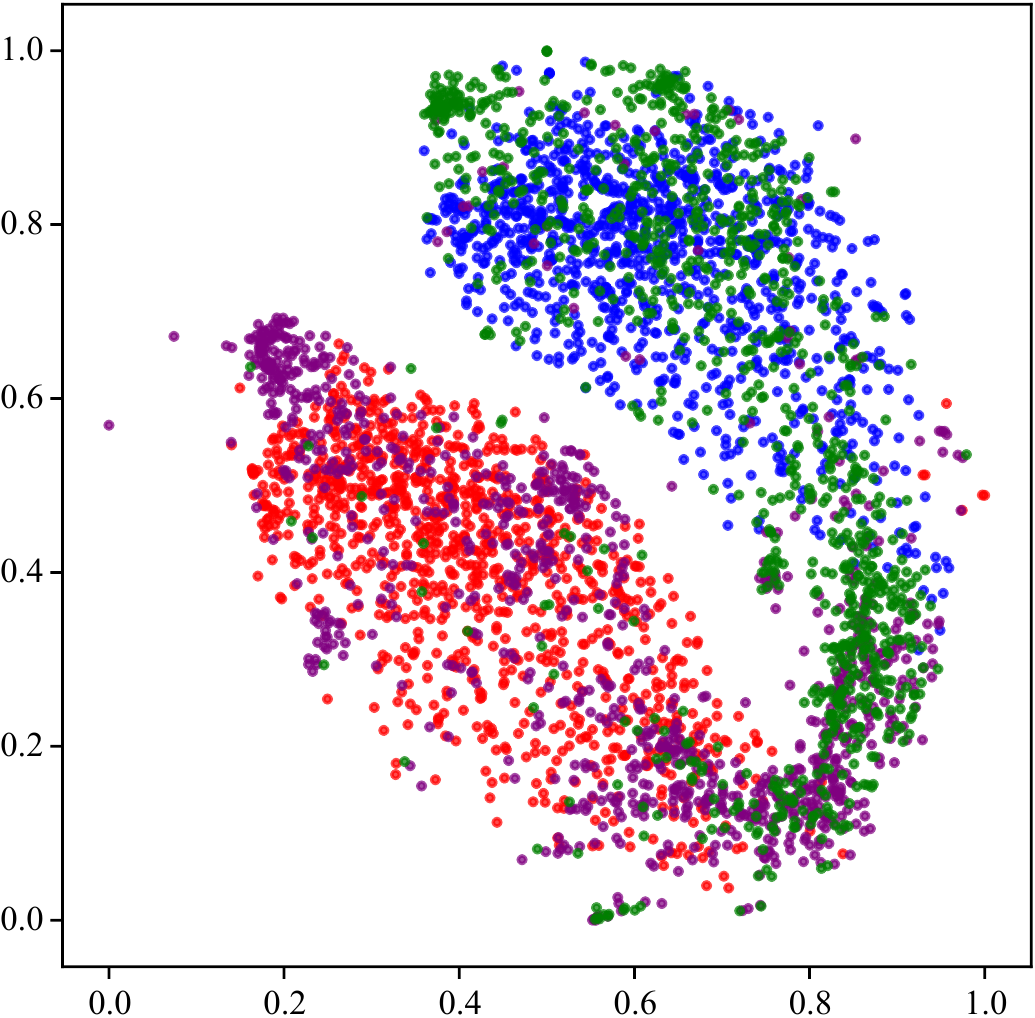}
	}
	\subfigure[t-SNE of WDGRL features]{
		\includegraphics[width=0.22\textwidth]{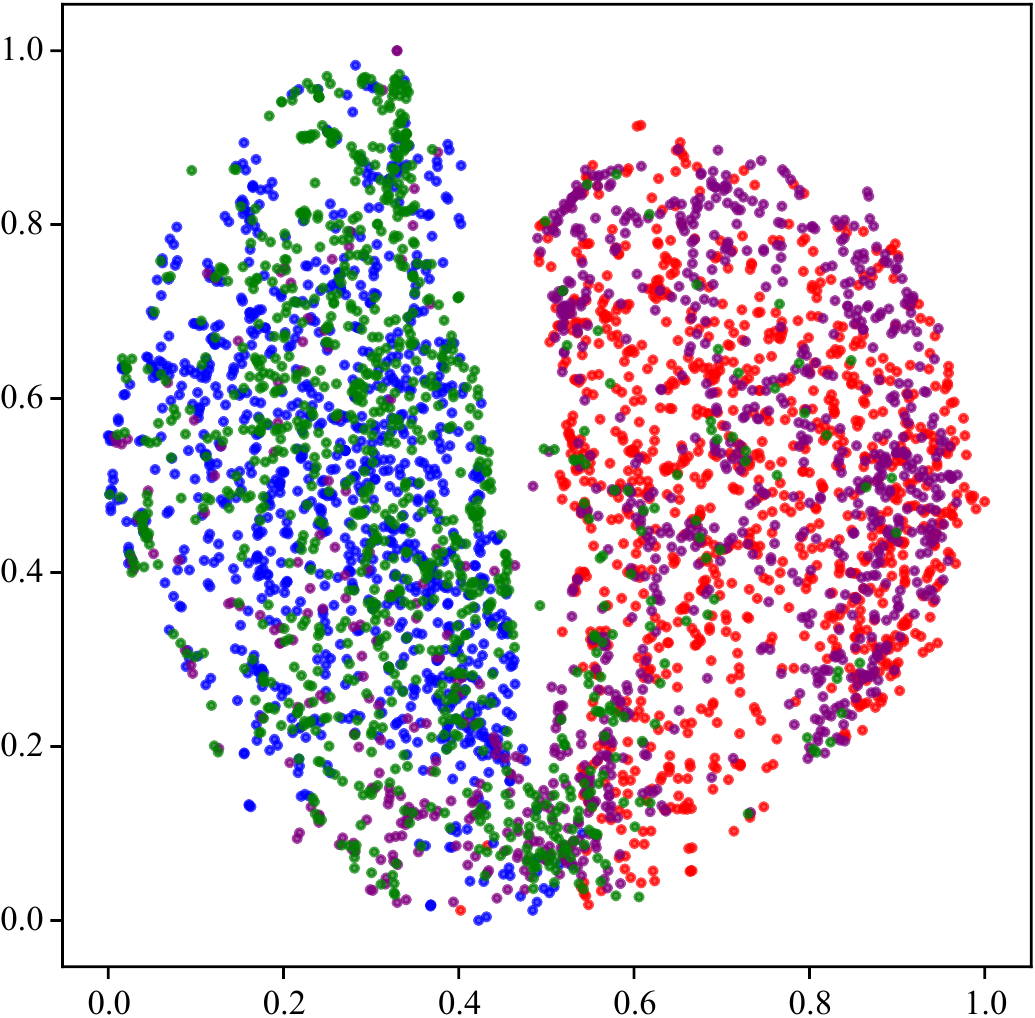}
	}
	
	\caption{Feature visualization of the D\(\rightarrow \)E task in Amazon review dataset.} %Red and blue points correspond to positive and negative samples respectively in source domain while purple and green points correspond to positive and negative samples respectively in target domain.}
	\label{fig:t-sne}
\end{figure}

\section{Conclusions}
In this paper, we propose a new adversarial approach WDGRL to learn domain invariant feature representations for domain adaptation. WDGRL can effectively reduce the domain discrepancy taking advantage of the gradient property of Wasserstein distance and the transferability is guaranteed by the generalization bound.
Our proposed approach could be further integrated into other domain adaptation frameworks \cite{bousmalis2016domain,tzeng2014deep,long2015learning,long2016deep,zhuang2015supervised} to attain better transferability. Empirical results on sentiment and image classification domain adaptation datasets demonstrate that WDGRL outperforms the state-of-the-art domain invariant feature learning approaches. From feature visualization, one can easily observe that WDGRL yields domain invariant yet target-discriminative feature representations.
In future work, we will investigate more sophisticated architectures for tasks on image data as well as integrate WDGRL into existing adaptation frameworks.

\section{Acknowledgement}

This work is financially supported by NSFC (61702327) and Shanghai Sailing Program (17YF1428200).

\bibliography{wd-tl}

\begin{thebibliography}{}

\bibitem[\protect\citeauthoryear{Ajakan \bgroup et al\mbox.\egroup
  }{2014}]{ajakan2014domain}
Ajakan, H.; Germain, P.; Larochelle, H.; Laviolette, F.; and Marchand, M.
\newblock 2014.
\newblock Domain-adversarial neural networks.
\newblock {\em arXiv:1412.4446}.

\bibitem[\protect\citeauthoryear{Arjovsky, Chintala, and
  Bottou}{2017}]{arjovsky2017wasserstein}
Arjovsky, M.; Chintala, S.; and Bottou, L.
\newblock 2017.
\newblock Wasserstein gan.
\newblock {\em arXiv:1701.07875}.

\bibitem[\protect\citeauthoryear{Ben-David \bgroup et al\mbox.\egroup
  }{2007}]{ben2007analysis}
Ben-David, S.; Blitzer, J.; Crammer, K.; and Pereira, F.
\newblock 2007.
\newblock Analysis of representations for domain adaptation.
\newblock In {\em NIPS}.

\bibitem[\protect\citeauthoryear{Ben-David \bgroup et al\mbox.\egroup
  }{2010}]{ben2010theory}
Ben-David, S.; Blitzer, J.; Crammer, K.; Kulesza, A.; Pereira, F.; and Vaughan,
  J.~W.
\newblock 2010.
\newblock A theory of learning from different domains.
\newblock {\em Machine learning}.

\bibitem[\protect\citeauthoryear{Blitzer \bgroup et al\mbox.\egroup
  }{2007}]{blitzer2007biographies}
Blitzer, J.; Dredze, M.; Pereira, F.; et~al.
\newblock 2007.
\newblock Biographies, bollywood, boom-boxes and blenders: Domain adaptation
  for sentiment classification.
\newblock In {\em ACL}.

\bibitem[\protect\citeauthoryear{Bolley, Guillin, and
  Villani}{2007}]{bolley2007quantitative}
Bolley, F.; Guillin, A.; and Villani, C.
\newblock 2007.
\newblock Quantitative concentration inequalities for empirical measures on
  non-compact spaces.
\newblock {\em Probability Theory and Related Fields}.

\bibitem[\protect\citeauthoryear{Bousmalis \bgroup et al\mbox.\egroup
  }{2016}]{bousmalis2016domain}
Bousmalis, K.; Trigeorgis, G.; Silberman, N.; Krishnan, D.; and Erhan, D.
\newblock 2016.
\newblock Domain separation networks.
\newblock In {\em NIPS}.

\bibitem[\protect\citeauthoryear{Chen \bgroup et al\mbox.\egroup
  }{2012}]{chen2012marginalized}
Chen, M.; Xu, Z.; Weinberger, K.; and Sha, F.
\newblock 2012.
\newblock Marginalized denoising autoencoders for domain adaptation.
\newblock {\em arXiv:1206.4683}.

\bibitem[\protect\citeauthoryear{Chen, Chen, and
  Weinberger}{2011}]{chen2011automatic}
Chen, M.; Chen, Y.; and Weinberger, K.~Q.
\newblock 2011.
\newblock Automatic feature decomposition for single view co-training.
\newblock In {\em ICML}.

\bibitem[\protect\citeauthoryear{Chen, Weinberger, and
  Blitzer}{2011}]{chen2011co}
Chen, M.; Weinberger, K.~Q.; and Blitzer, J.
\newblock 2011.
\newblock Co-training for domain adaptation.
\newblock In {\em NIPS}.

\bibitem[\protect\citeauthoryear{Chu, De~la Torre, and
  Cohn}{2013}]{chu2013selective}
Chu, W.-S.; De~la Torre, F.; and Cohn, J.~F.
\newblock 2013.
\newblock Selective transfer machine for personalized facial action unit
  detection.
\newblock In {\em CVPR}.

\bibitem[\protect\citeauthoryear{Courty \bgroup et al\mbox.\egroup
  }{2017}]{courty2017optimal}
Courty, N.; Flamary, R.; Tuia, D.; and Rakotomamonjy, A.
\newblock 2017.
\newblock Optimal transport for domain adaptation.
\newblock {\em IEEE transactions on pattern analysis and machine intelligence}.

\bibitem[\protect\citeauthoryear{Courty, Flamary, and
  Tuia}{2014}]{courty2014domain}
Courty, N.; Flamary, R.; and Tuia, D.
\newblock 2014.
\newblock Domain adaptation with regularized optimal transport.
\newblock In {\em ECML/PKDD}.

\bibitem[\protect\citeauthoryear{Donahue \bgroup et al\mbox.\egroup
  }{2014}]{donahue2014decaf}
Donahue, J.; Jia, Y.; Vinyals, O.; Hoffman, J.; Zhang, N.; Tzeng, E.; and
  Darrell, T.
\newblock 2014.
\newblock Decaf: A deep convolutional activation feature for generic visual
  recognition.
\newblock In {\em ICML}.

\bibitem[\protect\citeauthoryear{Duan, Xu, and
  Chang}{2012}]{duan2012exploiting}
Duan, L.; Xu, D.; and Chang, S.-F.
\newblock 2012.
\newblock Exploiting web images for event recognition in consumer videos: A
  multiple source domain adaptation approach.
\newblock In {\em CVPR}.
\newblock IEEE.

\bibitem[\protect\citeauthoryear{Ganin \bgroup et al\mbox.\egroup
  }{2016}]{ganin2016domain}
Ganin, Y.; Ustinova, E.; Ajakan, H.; Germain, P.; Larochelle, H.; Laviolette,
  F.; Marchand, M.; and Lempitsky, V.
\newblock 2016.
\newblock Domain-adversarial training of neural networks.
\newblock {\em JMLR}.

\bibitem[\protect\citeauthoryear{Glorot, Bordes, and
  Bengio}{2011}]{glorot2011domain}
Glorot, X.; Bordes, A.; and Bengio, Y.
\newblock 2011.
\newblock Domain adaptation for large-scale sentiment classification: A deep
  learning approach.
\newblock In {\em ICML}.

\bibitem[\protect\citeauthoryear{Gong \bgroup et al\mbox.\egroup
  }{2012}]{gong2012geodesic}
Gong, B.; Shi, Y.; Sha, F.; and Grauman, K.
\newblock 2012.
\newblock Geodesic flow kernel for unsupervised domain adaptation.
\newblock In {\em CVPR}.
\newblock IEEE.

\bibitem[\protect\citeauthoryear{Goodfellow \bgroup et al\mbox.\egroup
  }{2014}]{goodfellow2014generative}
Goodfellow, I.; Pouget-Abadie, J.; Mirza, M.; Xu, B.; Warde-Farley, D.; Ozair,
  S.; Courville, A.; and Bengio, Y.
\newblock 2014.
\newblock Generative adversarial nets.
\newblock In {\em NIPS}.

\bibitem[\protect\citeauthoryear{Gretton \bgroup et al\mbox.\egroup
  }{2012}]{gretton2012kernel}
Gretton, A.; Borgwardt, K.~M.; Rasch, M.~J.; Sch{\"o}lkopf, B.; and Smola, A.
\newblock 2012.
\newblock A kernel two-sample test.
\newblock {\em JMLR}.

\bibitem[\protect\citeauthoryear{Gulrajani \bgroup et al\mbox.\egroup
  }{2017}]{gulrajani2017improved}
Gulrajani, I.; Ahmed, F.; Arjovsky, M.; Dumoulin, V.; and Courville, A.
\newblock 2017.
\newblock Improved training of wasserstein gans.
\newblock {\em arXiv:1704.00028}.

\bibitem[\protect\citeauthoryear{Hoffman \bgroup et al\mbox.\egroup
  }{2014}]{hoffman2014asymmetric}
Hoffman, J.; Rodner, E.; Donahue, J.; Kulis, B.; and Saenko, K.
\newblock 2014.
\newblock Asymmetric and category invariant feature transformations for domain
  adaptation.
\newblock {\em IJCV}.

\bibitem[\protect\citeauthoryear{Huang \bgroup et al\mbox.\egroup
  }{2007}]{huang2007correcting}
Huang, J.; Smola, A.~J.; Gretton, A.; Borgwardt, K.~M.; Sch{\"o}lkopf, B.;
  et~al.
\newblock 2007.
\newblock Correcting sample selection bias by unlabeled data.
\newblock {\em NIPS}.

\bibitem[\protect\citeauthoryear{Kandemir}{2015}]{kandemir2015asymmetric}
Kandemir, M.
\newblock 2015.
\newblock Asymmetric transfer learning with deep gaussian processes.
\newblock In {\em ICML}.

\bibitem[\protect\citeauthoryear{Long \bgroup et al\mbox.\egroup
  }{2013}]{Long_2013_ICCV}
Long, M.; Wang, J.; Ding, G.; Sun, J.; and Yu, P.~S.
\newblock 2013.
\newblock Transfer feature learning with joint distribution adaptation.
\newblock In {\em The IEEE International Conference on Computer Vision (ICCV)}.

\bibitem[\protect\citeauthoryear{Long \bgroup et al\mbox.\egroup
  }{2015}]{long2015learning}
Long, M.; Cao, Y.; Wang, J.; and Jordan, M.
\newblock 2015.
\newblock Learning transferable features with deep adaptation networks.
\newblock In {\em ICML}.

\bibitem[\protect\citeauthoryear{Long \bgroup et al\mbox.\egroup
  }{2016}]{long2016deep}
Long, M.; Wang, J.; Cao, Y.; Sun, J.; and Philip, S.~Y.
\newblock 2016.
\newblock Deep learning of transferable representation for scalable domain
  adaptation.
\newblock {\em TKDE}.

\bibitem[\protect\citeauthoryear{Luo \bgroup et al\mbox.\egroup
  }{2017}]{luo2017close}
Luo, L.; Wang, X.; Hu, S.; Wang, C.; Tang, Y.; and Chen, L.
\newblock 2017.
\newblock Close yet distinctive domain adaptation.
\newblock {\em arXiv:1704.04235}.

\bibitem[\protect\citeauthoryear{Mansour, Mohri, and
  Rostamizadeh}{2009}]{mansour2009domain}
Mansour, Y.; Mohri, M.; and Rostamizadeh, A.
\newblock 2009.
\newblock Domain adaptation with multiple sources.
\newblock In {\em NIPS}.

\bibitem[\protect\citeauthoryear{Narayanan and
  Mitter}{2010}]{narayanan2010sample}
Narayanan, H., and Mitter, S.
\newblock 2010.
\newblock Sample complexity of testing the manifold hypothesis.
\newblock In {\em NIPS}.

\bibitem[\protect\citeauthoryear{Pan and Yang}{2010}]{pan2010survey}
Pan, S.~J., and Yang, Q.
\newblock 2010.
\newblock A survey on transfer learning.
\newblock {\em IEEE Transactions on knowledge and data engineering}.

\bibitem[\protect\citeauthoryear{Redko, Habrard, and
  Sebban}{2016}]{redko2016theoretical}
Redko, I.; Habrard, A.; and Sebban, M.
\newblock 2016.
\newblock Theoretical analysis of domain adaptation with optimal transport.
\newblock {\em arXiv:1610.04420}.

\bibitem[\protect\citeauthoryear{Rozantsev, Salzmann, and
  Fua}{2016}]{rozantsev2016beyond}
Rozantsev, A.; Salzmann, M.; and Fua, P.
\newblock 2016.
\newblock Beyond sharing weights for deep domain adaptation.
\newblock {\em arXiv:1603.06432}.

\bibitem[\protect\citeauthoryear{Sun and Saenko}{2016}]{sun2016deep}
Sun, B., and Saenko, K.
\newblock 2016.
\newblock Deep coral: Correlation alignment for deep domain adaptation.
\newblock In {\em ECCV 2016 Workshops}.
\newblock Springer.

\bibitem[\protect\citeauthoryear{Sun, Feng, and Saenko}{2016}]{sun2016return}
Sun, B.; Feng, J.; and Saenko, K.
\newblock 2016.
\newblock Return of frustratingly easy domain adaptation.
\newblock In {\em AAAI}.

\bibitem[\protect\citeauthoryear{Tzeng \bgroup et al\mbox.\egroup
  }{2014}]{tzeng2014deep}
Tzeng, E.; Hoffman, J.; Zhang, N.; Saenko, K.; and Darrell, T.
\newblock 2014.
\newblock Deep domain confusion: Maximizing for domain invariance.
\newblock {\em arXiv:1412.3474}.

\bibitem[\protect\citeauthoryear{Tzeng \bgroup et al\mbox.\egroup
  }{2017}]{tzeng2017adversarial}
Tzeng, E.; Hoffman, J.; Saenko, K.; and Darrell, T.
\newblock 2017.
\newblock Adversarial discriminative domain adaptation.
\newblock {\em arXiv:1702.05464}.

\bibitem[\protect\citeauthoryear{Villani}{2008}]{villani2008optimal}
Villani, C.
\newblock 2008.
\newblock {\em Optimal transport: old and new}.

\bibitem[\protect\citeauthoryear{Weiss, Khoshgoftaar, and
  Wang}{2016}]{weiss2016survey}
Weiss, K.; Khoshgoftaar, T.~M.; and Wang, D.
\newblock 2016.
\newblock A survey of transfer learning.
\newblock {\em Journal of Big Data}.

\bibitem[\protect\citeauthoryear{Zhuang \bgroup et al\mbox.\egroup
  }{2015}]{zhuang2015supervised}
Zhuang, F.; Cheng, X.; Luo, P.; Pan, S.~J.; and He, Q.
\newblock 2015.
\newblock Supervised representation learning: Transfer learning with deep
  autoencoders.
\newblock In {\em IJCAI}.

\end{thebibliography}
\bibliographystyle{aaai}

\onecolumn

\section{Appendix}

\subsection{Gradient Superiority}

Here we would like to prove the gradient priority of Wasserstein distance over cross-entropy in the situation where the mapped feature distributions fill in the whole feature space.
For simplicity, we take two normal distributions as an example and the conclusion still holds in the high-dimensional space.
Fig~\ref{fig:gaussian} shows the two normal distributions and the whole space is divided into 3 regions where the probability of source data lying in region A is high while that of target data is extremely low. The situation is just opposite in region C and in region B two distributions differ a little.

\begin{figure}[htpb]
	{
			\centering
			\includegraphics[width=0.4\textwidth]{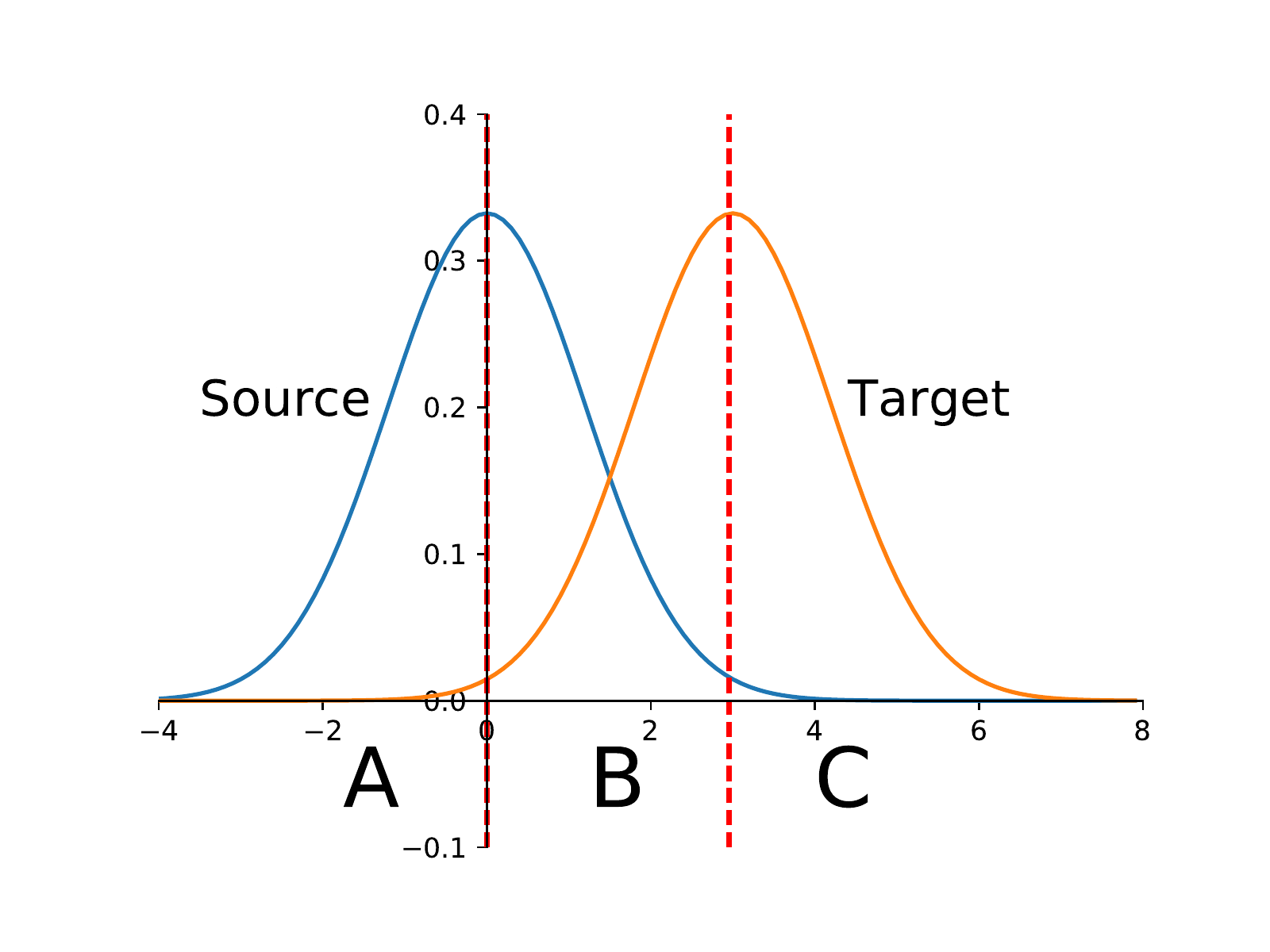}
		
	}
	\caption{Gaussian Example}
	\label{fig:gaussian}
\end{figure}

We use the same notation here as above. We assume that source data are labeled 1 while target data are labeled 0 and a domain classifier is used to help learn the domain invariant representations. So given one instance $(x, y)$ from either domain, the feature extractor minimizes the following objective which could be viewed as the negative of cross-entropy between the domain label $y$ and its corresponding prediction $\sigma(f_d(f_g(x)))$
%\begin{equation}
%D(\mathbb{P}_{h^s},\mathbb{P}_{h^t}) = \mathbb{E}_{x \sim \mathbb{P}_{x^s}} [\log \sigma  (f_g(x))] + \mathbb{E}_{x \sim \mathbb{P}_{x^t}  }[\log (1- \sigma (f_g(x)))]
%\end{equation}
\begin{equation}
\mathcal{L}_D(x, y) = y \log \sigma  (f_d(f_g(x))) + (1-y) \log (1-\sigma(f_d(f_g(x))))
\end{equation}
where $\sigma$ is the sigmoid function and $f_d$ is the logit computed by the domain classifier network. Then the gradient of $\mathcal{L}_D $ with respect to $\theta_g $ can be computed according to the chain rule, i.e. 
$
\frac{\partial \mathcal{L}_D}{\partial \theta_g} = \frac{\partial \mathcal{L}_D} {\partial f_d} \frac{\partial f_d} {\partial f_g} \frac{\partial f_g}{\partial \theta_g}
$. The first term can be directly computed 
\begin{equation}
\frac{\partial \mathcal{L}_D} {\partial f_d} = y - \sigma(f_d(f_g))
\end{equation}
As we know, the optimal domain classifier is $\sigma(f_d^*(h)) = \frac{p(h)}{p(h)+q(h)}$ where $h=f_g(x)$ and $p(h)$ represents the source feature distribution and $q(h)$ represents the target feature distribution. So if one source instance lies in region A, it provides gradient of almost 0. The same result holds for target samples lying in region C. So these points make no contribution to the gradient and thus the divergence between feature distributions couldn't be reduced effectively.

Now we consider Wasserstein distance as the loss function
%\begin{equation}
%W(\mathbb{P}_{h^s},\mathbb{P}_{h^t}) =  \sup_{\left \| f_w \right \|_L \leq 1} \mathbb{E}_{x \sim \mathbb{P}_{x^s}} [f_w(f_g(x))] - \mathbb{E}_{x \sim \mathbb{P}_{x^t}  }[f_w(f_g(x))]
%\end{equation}
\begin{equation}
\mathcal{L}_W = \mathbb{E}_{x \sim \mathbb{P}_{x^s}} [f_w(f_g(x))] - \mathbb{E}_{x \sim \mathbb{P}_{x^t}  }[f_w(f_g(x))].
\end{equation}
The gradient of $\mathcal{L}_W $ with respect to $\theta_g $ can be computed according to the chain rule, i.e. 
$
\frac{\partial \mathcal{L}_W}{\partial \theta_g} = \frac{\partial \mathcal{L}_W} {\partial f_w} \frac{\partial f_w} {\partial f_g} \frac{\partial f_g}{\partial \theta_g}
$.
So for source domain data $x \sim \mathbb{P}_{x^s}$, $\frac{\partial \mathcal{L}_W}{\partial f_w} = 1$; while for target domain data $x \sim \mathbb{P}_{x^t}$, $\frac{\partial \mathcal{L}_W}{\partial f_w} = -1$. Therefore Wasserstein distance can always provide stable gradients wherever data is.

\subsection{Generalization Bound}	

We now continue from the Theorem 1 in the paper to prove that target error can be bounded by the Wasserstein distance for empirical measures on the source and target samples. we first present a statement showing the convergence of the empirical measure to the true Wasserstein distance.

\begin{theorem}
\label{theo: wd-emprical}
(\cite{bolley2007quantitative}, Theorem 2.1; \cite{redko2016theoretical}, Theorem 1) Let $\mu$ be a probability measure in $\mathbb{R}^d$ 
%so that for some $\alpha > 0$, we have that $\int _{\mathbb{R}^d } e ^{\alpha \rho^2(x,y)}d\mu(x) <  + \infty, \forall y \in \mathbb{R}^d $.
satisfying $T_1(\lambda)$ inequality. 
Let $\hat{\mu} = \frac{1}{N} \sum_{i=1}^{N} \delta_{x_i}$ be its associated empirical defined on a sample of independent variables $ \{x_i\} _{i=1}^N $ drawn from $\mu$. Then for any $d'>d$ and $\lambda'< \lambda$ there exists some constant $N_0$ depending on $d'$ and some square exponential moment of $\mu $ such that for any $\epsilon>0$ and $N \geq N_0\text{max}(\varepsilon^{-(d+2)},1)$
\begin{equation}
\mathbb{P}[W_1(\mu,\hat{\mu})>\varepsilon] \leq \text{exp} \big( - \frac{\lambda'}{2} N \varepsilon^2  \big)
\end{equation} 
where $d', \lambda'$ can be calculated explicitly.
\end{theorem}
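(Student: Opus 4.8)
The plan is to split the statement into two independent pieces—a concentration-around-the-mean estimate and a mean-convergence estimate—following the transportation-inequality strategy. Recall that $\mu$ satisfying $T_1(\lambda)$ means $W_1(\nu,\mu)\le\sqrt{2\lambda^{-1}H(\nu\,|\,\mu)}$ for every $\nu$ absolutely continuous with respect to $\mu$, where $H$ denotes relative entropy. By the Bobkov--G\"otze characterization this is equivalent to a sub-Gaussian bound on the Laplace transform of every $1$-Lipschitz observable, $\int e^{s(f-\int f\,d\mu)}\,d\mu\le e^{s^2/(2\lambda)}$, and—crucially—this sub-Gaussian concentration property tensorizes with a dimension-free constant: the product $\mu^{\otimes N}$ obeys the same estimate for functions that are $1$-Lipschitz with respect to the $\ell^2$ product metric $d_2(x,y)=(\sum_i\rho(x_i,y_i)^2)^{1/2}$, again with constant $\lambda$.

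First I would establish the concentration step. Viewing $F(x_1,\dots,x_N)=W_1(\mu,\hat\mu)$ as a function of the sample, the triangle inequality for $W_1$ combined with the observation that replacing a single atom $x_i$ by $x_i'$ moves the empirical measure by at most $W_1(\hat\mu,\hat\mu')\le\frac1N\rho(x_i,x_i')$ shows that $F$ is $\frac1N$-Lipschitz in each coordinate; Cauchy--Schwarz then gives $|F(x)-F(y)|\le\frac{1}{\sqrt N}\,d_2(x,y)$, so $F$ is $\frac{1}{\sqrt N}$-Lipschitz for $d_2$. Feeding this into the tensorized sub-Gaussian bound and applying the Chernoff argument yields the dimension-free deviation estimate
\begin{equation}
\mathbb{P}\big[W_1(\mu,\hat\mu)-\mathbb{E}\,W_1(\mu,\hat\mu)>t\big]\le\exp\Big(-\tfrac{\lambda}{2}N t^2\Big).
\end{equation}

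Next I would handle the mean, whose goal is a rate estimate guaranteeing $\mathbb{E}\,W_1(\mu,\hat\mu)\le\epsilon/2$ as soon as $N\ge N_0\max(\epsilon^{-(d+2)},1)$. I would bound $\mathbb{E}\,W_1(\mu,\hat\mu)$ by partitioning $\mathbb{R}^d$ into dyadic cells: on a bounded region the transport cost is controlled by the $\ell^1$ fluctuations of the empirical cell frequencies, which are of order $N^{-1/2}$ per cell and, after summing over the relevant cells and optimizing the mesh, produce a polynomial rate in $N$; the unbounded-tail contribution is absorbed using the square-exponential moment $\int e^{\alpha\rho(x,x_0)^2}\,d\mu<\infty$ implied by $T_1(\lambda)$. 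Matching this rate against $\epsilon/2$ is what forces the threshold $N\ge N_0\epsilon^{-(d+2)}$, and the passage from the true dimension $d$ to any $d'>d$ (with $\lambda$ replaced by any $\lambda'<\lambda$) is exactly the slack one pays in the covering/chaining bound; this is the origin of the explicitly computable constants $N_0,d',\lambda'$. Finally I would combine the two parts: taking $t=\epsilon-\mathbb{E}\,W_1\ge\epsilon/2$ in the concentration bound and re-absorbing the resulting numerical factor into the slightly smaller constant $\lambda'$ delivers the claimed $\exp(-\tfrac{\lambda'}{2}N\epsilon^2)$.

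The main obstacle is the mean-convergence estimate: obtaining the sharp $\epsilon^{-(d+2)}$ dependence while rigorously controlling both the bounded-region fluctuations and the heavy far-cell contribution is the technical heart of the argument, and it is precisely the Bolley--Guillin--Villani refinement of the Dudley-type partitioning bound. By contrast, once $T_1(\lambda)$ is assumed the concentration step is essentially automatic through tensorization, so I expect nearly all the effort to go into the mean bound and the bookkeeping of the constants $d'$, $\lambda'$, and $N_0$.
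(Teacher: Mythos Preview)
The paper does not actually prove this theorem: it is quoted verbatim as a known result from Bolley--Guillin--Villani (2007, Theorem~2.1) and Redko (2016, Theorem~1), and is then applied as a black box in the proof of the subsequent generalization bound. There is therefore no ``paper's own proof'' to compare your proposal against.

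That said, your outline is faithful to the original Bolley--Guillin--Villani argument. The two-step decomposition---(i) dimension-free Gaussian concentration of $W_1(\mu,\hat\mu)$ around its mean via tensorization of $T_1(\lambda)$ and the $N^{-1/2}$-Lipschitz property of the empirical Wasserstein functional, and (ii) a polynomial rate for $\mathbb{E}\,W_1(\mu,\hat\mu)$ obtained from a dyadic covering/partitioning argument with the square-exponential moment handling the tails---is exactly their strategy, and your identification of the mean bound as the place where the $\varepsilon^{-(d+2)}$ threshold and the slack $d'>d$, $\lambda'<\lambda$ arise is correct. One small quibble: in the concentration step you should note that the constant you actually obtain is $\lambda$, not $\lambda'$; the degradation to $\lambda'<\lambda$ comes entirely from absorbing the numerical factor produced when you replace $t$ by $\varepsilon/2$ after the mean bound, as you indicate at the end. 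Otherwise the proposal is sound.
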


Now we can follow the Theorem~\ref{theo: wd-bound} and Theorem~\ref{theo: wd-emprical} to prove that target error can be bounded by the Wasserstein distance for empirical measures on the source and target samples as the process of the proof of the Theorem 3. in \cite{redko2016theoretical}.

\begin{theorem}
Under the assumption of Lemma 1, let two probability measures satisfy $T_1(\lambda)$ inequality, $X_s$ and $X_t$ be two samples of size $N_s$ and $N_t$ drawn i.i.d from $\mu_s$ and $\mu_t$ resepectively. Let $\hat{\mu_s} = \frac{1}{N_s}\sum_{i=1}^{N_s} \delta _{x_i^s}$ and $\hat{\mu_t} = \frac{1}{N_t}\sum_{i=1}^{N_t} \delta _{x_i^t}$ be the associated empirical measures. Then for any $d' > d$ and $ \lambda' < \lambda $ there exists some constant $N_0$ depending on $d'$ such that for any $\delta >0 $ and $\text{min}(Ns,Nt) \geq N_0\text{max}(\delta^{-(d'+2)},1)$ with probability at least $1-\delta$ for all $h$ the followingt holds:
\begin{equation}
\epsilon_t(h) \leq \epsilon_s(h) + 2K W_1(\hat{\mu_s},\hat{\mu_t}) + \lambda + 2K \sqrt{2log \bigg(\frac{1}{\delta} \bigg) / \lambda'} \bigg( \sqrt{\frac{1}{N_s}} + \sqrt{\frac{1}{N_t}} \bigg)
\end{equation}
where $\lambda$ is the combined error of the ideal hypothesis $ h^* $ that minimizes the combined error of $\epsilon_s(h)+\epsilon_t(h)$.
\end{theorem}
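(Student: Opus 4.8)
The plan is to obtain the empirical bound by composing the two results already in hand: the population-level guarantee of Theorem~\ref{theo: wd-bound}, which states $\epsilon_t(h) \leq \epsilon_s(h) + 2K W_1(\mu_s,\mu_t) + \lambda$ for every $h\in H$, and the concentration estimate of Theorem~\ref{theo: wd-emprical}, which controls how fast an empirical measure approaches its population counterpart in $W_1$. Since Theorem~\ref{theo: wd-bound} already delivers the shape of the target inequality, it suffices to replace the population distance $W_1(\mu_s,\mu_t)$ by the empirical distance $W_1(\hat{\mu_s},\hat{\mu_t})$ at the cost of an explicit, high-probability error term, and the bridge between the two is the triangle inequality for the Wasserstein metric.

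First I would use that $W_1$ is a genuine metric on measures with finite first moment (the $T_1(\lambda)$ hypothesis guarantees finite square-exponential moments, hence finite first moments, while the empirical measures are finitely supported) to write
\begin{equation}
W_1(\mu_s,\mu_t) \leq W_1(\mu_s,\hat{\mu_s}) + W_1(\hat{\mu_s},\hat{\mu_t}) + W_1(\hat{\mu_t},\mu_t).
\end{equation}
The two outer terms are exactly the sampling deviations that Theorem~\ref{theo: wd-emprical} governs, and the middle term is the empirical distance appearing in the claim.

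Next I would turn the tail bound of Theorem~\ref{theo: wd-emprical} into a deviation bound by inverting it: setting $\exp(-\tfrac{\lambda'}{2} N \varepsilon^2)$ equal to the allowed failure probability and solving for $\varepsilon$ yields $W_1(\mu,\hat{\mu}) \leq \sqrt{2\log(1/\delta)/(\lambda' N)}$ with probability at least $1-\delta$. Applying this once to the source sample with $N=N_s$ and once to the target sample with $N=N_t$, and combining the two events with a union bound, both deviations hold simultaneously with the stated confidence, provided $\min(N_s,N_t) \geq N_0\max(\delta^{-(d'+2)},1)$ so that the hypothesis of Theorem~\ref{theo: wd-emprical} is met for each sample. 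Substituting these two estimates into the triangle inequality, multiplying through by $2K$, and inserting the result into Theorem~\ref{theo: wd-bound} produces precisely the claimed inequality, with the final term equal to $2K\sqrt{2\log(1/\delta)/\lambda'}\,(\sqrt{1/N_s}+\sqrt{1/N_t})$.

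The argument is essentially a mechanical composition, so I do not anticipate a deep obstacle; the only part requiring genuine care is the probabilistic bookkeeping. One must invert the concentration inequality in the correct direction (choosing $\varepsilon$ as a function of $\delta$ and $N$, not the reverse), check that the resulting $\varepsilon$ still satisfies the sample-size constraint of Theorem~\ref{theo: wd-emprical}, and track the union bound over the source and target deviation events so that the reported confidence $1-\delta$ and the constant multiplying $\log(1/\delta)$ are accounted for correctly. This follows the same route as the proof of Theorem~3 in \cite{redko2016theoretical}.
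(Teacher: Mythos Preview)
Your proposal is correct and follows essentially the same route as the paper's own proof: start from the population bound of Theorem~\ref{theo: wd-bound}, insert the triangle inequality for $W_1$ to interpose the empirical measures, and then invert the concentration estimate of Theorem~\ref{theo: wd-emprical} for each domain. If anything, you are more explicit than the paper about the union bound and the sample-size constraint, which the paper leaves implicit.
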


\begin{proof}
\begin{equation}
\begin{aligned}
\epsilon_t(h) 
& \leq \epsilon_s(h) + 2KW_1(\mu_s,\mu_t) + \lambda \\
& \leq \epsilon_s(h) + 2KW_1(\mu_s,\hat{\mu_s}) + 2KW_1(\hat{\mu_s}, \mu_t) + \lambda \\
& \leq \epsilon_s(h) + 2K \sqrt{2log \bigg(\frac{1}{\delta} \bigg) / N_s\lambda'} + 2KW_1(\hat{\mu_s}, \hat{\mu_t}) + 2KW_1(\hat{\mu_t,\mu_t}) + \lambda \\
& \leq \epsilon_s(h) + 2KW_1(\hat{\mu_s}, \hat{\mu_t}) + \lambda + 2K \sqrt{2log \bigg(\frac{1}{\delta} \bigg) / \lambda'} \bigg( \sqrt{\frac{1}{N_s}} + \sqrt{\frac{1}{N_t}} \bigg)
\end{aligned}
\end{equation}
\end{proof}

\subsection{More Experiment Results}

\textbf{Synthetic data.} We generate a synthetic dataset to show the superior gradient advantage of WDGRL over DANN. In the paper, we claim that when two representation distributions are distant or have regions they differ a lot, DANN will have gradient vanishing problem while WDGRL still provides the stable gradient. It is a little difficult to fully realize such situations, so we design a rather restrictive experiment. However, this toy experiment does verify DANN may fail in some situations while WDGRL can work. We visualize the data input in Figure~\ref{fig: synthetic-data} with 2000 samples for each domain. And from Figure~\ref{fig: dann_clssifier} we find that if we adopt DANN the domain classifier can distinguish two domain data well and the DANN loss decreases to nearly 0 as the training process continues. In such situation, the domain classifier can provide poor gradient. As shown in ~\ref{fig: target_acc}, our WDGRL approach can effectively classify the target data while DANN fails.

\begin{figure}[htbp]
	\subfigure[input visualization]{
		\centering
		\includegraphics[width=0.31\textwidth]{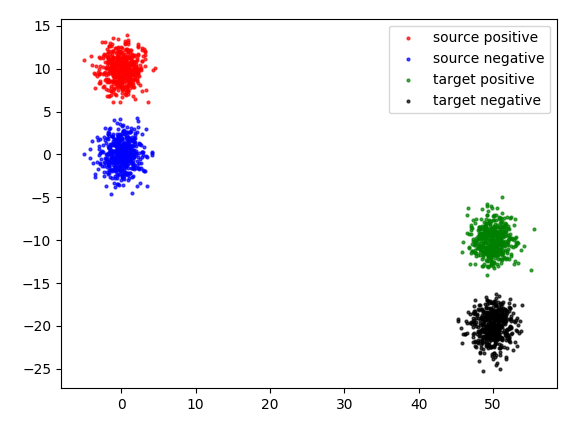}
		\label{fig: synthetic-data}
	}
	\subfigure[DANN loss and accuracy]{
		\centering
		\includegraphics[width=0.31\textwidth]{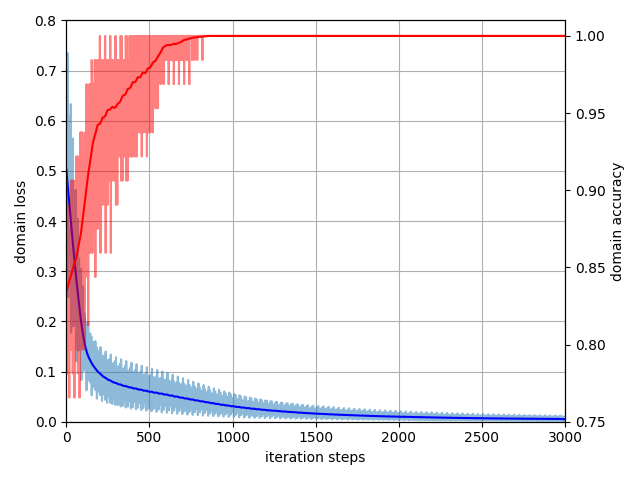}
		\label{fig: dann_clssifier}
	}
	\subfigure[Performance on target domain]{
		\centering
		\includegraphics[width=0.31\textwidth]{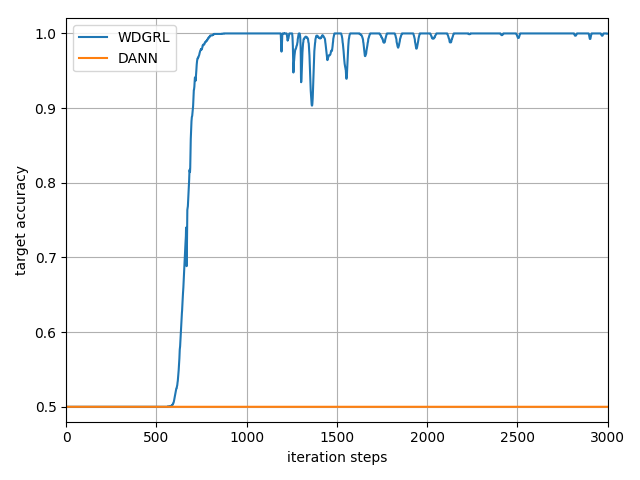}
		\label{fig: target_acc}
	}
	\caption{Synthetic experiment.}
\end{figure}

\textbf{Office-Caltech dataset with SURF features.} Table~\ref{tab:office-result-surf} shows the result of our experiments on Office-Caltech dataset with SURF features.

\begin{table}[ht]
	\small
	\caption{Performance (accuracy \%) on Office-Caltech dataset with Decaf features}
	\centering\label{tab:office-result-surf}
	\begin{tabular}{cccccc}
		\hline 
		& S-only & MMD & DANN & D-CORAL & WDGRL \\
		\hline
		A \(\rightarrow \) C & 43.19 & 44.08 & 44.97 & 44.97 & \textbf{45.86} \\
		A \(\rightarrow \) D & 35.03 & 41.40 & 41.40 & 40.13 & \textbf{44.59} \\
		A \(\rightarrow \) W & 35.23 & 37.29 & 38.64 & 38.31 & \textbf{40.68} \\
		\hline
		W \(\rightarrow \) A & 30.06 & 34.13 & 34.13 & \textbf{34.86} & 32.15 \\
		W \(\rightarrow \) D & 80.25 &\textbf{ 84.71} & 82.80 & 84.08 & 81.53 \\
		W \(\rightarrow \) C & 30.19 & 30.72 & 32.68 & \textbf{33.30} & 31.08 \\
		\hline
		C \(\rightarrow \) W & 36.95 & 40.34 & \textbf{43.39} & 40.00 & 42.37 \\
		C \(\rightarrow \) A & 52.92 & 54.80 & 54.91 & 53.44 & \textbf{55.22} \\
		C \(\rightarrow \) D & 45.86 & 47.13 & 47.77 & 47.13 & \textbf{48.41} \\
		\hline
		D \(\rightarrow \) W & 69.50 & 73.56 & 74.24 & 73.90 & \textbf{76.95} \\
		D \(\rightarrow \) A & 31.21 & 32.46 & 31.63 & 31.52 & \textbf{35.60} \\
		D \(\rightarrow \) C & 30.37 & 30.72 & 32.24 & 31.52 & \textbf{32.59} \\
		\hline
		AVG & 43.4 & 45.95 & 46.57 & 46.10 & \textbf{47.25} \\
		\hline
	\end{tabular}
\end{table}

\textbf{Email spam filtering dataset.} The email spam filtering dataset \footnote{http://www.ecmlpkdd2006.org/challenge.html} released by ECML/PKDD 2006 discovery challenge contains 4 separate user inboxes. From public inbox (source domain) 4,000 labeled training samples were collected, among which half samples are spam emails and the other half non-spam ones. The test samples were collected from 3 private inboxes (target domains), each of which consists of 2,500 samples. In our experiments, 3 cross-domain tasks are constructed from the public inbox to the private inboxes. We choose the 5,067 most frequent terms as features and 4 test samples were deleted as a result of not containing any of these terms. Experimenting on the 3 tasks by transferring from public to private groups of private inboxes \(u1 \sim u3 \), we found our method does achieve better performance than MMD, DANN and D-CORAL, which is demonstrated in Table~\ref{tab:email-result}. We can see from this result that all these approaches can reach the goal of learning the transferable features for they all outperform the source only baseline at least \(9 \%\). Among them, MMD and DANN achieve almost the same performance while WDGRL further boosts the performance by a rate of \(2.90 \%\). 
\begin{table}[ht]
	\small
	\caption{Performance (Accuracy \%) on email spam dataset}
	\centering
	\label{tab:email-result}
	\begin{tabular}{cccccc}
		\hline 
		& S only & MMD & DANN & D-CORAL & WDGRL\\
		\hline
		P \(\rightarrow u1\)  & 69.63 & 80.95 & 83.27 &79.71& \textbf{85.67}\\
		P \(\rightarrow u2\)  & 76.01 & 85.98 & 85.74 &83.83& \textbf{88.26}\\
		P \(\rightarrow u3\)  & 81.24 & 94.08 & 91.92 &89.80& \textbf{95.76}\\
		\hline
		AVG                        & 75.63 & 87.00 & 86.98 &84.45& \textbf{89.90}\\
		\hline
		
	\end{tabular}
\end{table}

\textbf{Newsgroup classification dataset.} The 20 newsgroups dataset \footnote{http://qwone.com/\textasciitilde jason/20Newsgroups/} is a collection of 18,774 newsgroup documents across 6 top categories and 20 subcategories in a hierarchical structure. In our experiments, we adopt a similar setting as \cite{duan2012domain}. The task is to classify top categories and the four largest top categories (comp, rec, sci, talk) are chosen for evaluation. Specifically, for each top category, the largest subcategory is selected as the source domain while the second largest subcategory is chosen as the target domain. Moreover, the largest category comp is considered as the positive class and one of the three other categories as the negative class. 

The distribution shift across newsgroups is caused by category specific words. Notice the construction of our domain adaptation tasks which aim to classify the top categories while the adaptation exists between the subcategories. It makes sense that there exist more differences among top categories than those among subcategories which implies that classification is not that sensitive to the subcategories and thus enables the ease of domain adaptation. Table~\ref{tab:news-reuslt} gives the information of performance on the 20newsgroup dataset from which we can find that the comparison methods are almost neck and neck, which is consistent with our previous observation.

\begin{table}[ht]
	\small
	\caption{Performance (Accuracy \%) on 20 newsgroup dataset}
	\centering\label{tab:news-reuslt}
	\begin{tabular}{cccccc}
		\hline 
		& S only & MMD & DANN & D-CORAL &WDGRL\\
		\hline
		C vs. R   & 81.62 & 97.85 & 98.10 & 97.57 &\textbf{98.35}\\
		C vs. S   & 74.01 & 87.52 & 90.57 & 84.20 &\textbf{91.33}\\
		C vs. T   & 94.44 & 96.96 & \textbf{97.75} & 97.22 &97.62\\
		\hline
		AVG            & 83.36 & 94.11 & 95.47 & 93.00 & \textbf{95.77}\\
		\hline
		
	\end{tabular}
\end{table}

\end{document}